\newcommand{\cmark}{\ding{51}}  
\newcommand{\xmark}{\ding{55}}  
\newtheorem{lemma}{Lemma}
\newtheorem{theorem}{Theorem}
\newtheorem{assumption}{Assumption}
\newcommand{\Norm}[1]{\left\|#1\right\|}
\newcommand{\dotprod}[2]{\left\langle#1,#2\right\rangle}
\newcommand{\tr}[1]{{\rm tr}\left( #1 \right)}
\newcommand{\msign}[1]{\mathrm{msign}\left( #1 \right)}
\newcommand{\EE}{\mathbb{E}}
\newcommand{\cO}{\mathcal{O}}
\newcommand{\RR}{\mathbb{R}}
\newcommand{\tG}{\tilde{G}}
\newcommand{\tM}{\tilde{M}}
\definecolor{lightblue}{HTML}{E8F0FE}
\title{Unbiased Gradient Low-Rank Projection}
\author{\bf Rui Pan$^{\spadesuit*}$, \quad
  Yang Luo$^{\clubsuit*}$ \quad
  Yuxing Liu$^{\spadesuit*}$ \quad
  Yang You$^{\clubsuit}$ \quad
  Tong Zhang$^{\spadesuit}$\\
  $^{\spadesuit}$University of Illinois Urbana-Champaign\\
  $^{\clubsuit}$National University of Singapore\\
  \texttt{\{ruip4, yuxing6, tozhang\}@illinois.edu}\\
  \texttt{\{yangluo, youy\}@comp.nus.edu.sg} 
  \\
  \\
}
\date{}
\begin{document}
\maketitle

\def\thefootnote{*}\footnotetext{Equal Contribution.}

\begin{abstract}
 \noindent Memory-efficient optimization is critical for training increasingly large language models (LLMs). A popular strategy involves gradient low-rank projection, storing only the projected optimizer states, with GaLore being a representative example. However, a significant drawback of many such methods is their lack of convergence guarantees, as various low-rank projection approaches introduce inherent biases relative to the original optimization algorithms, which contribute to performance gaps compared to full-parameter training. Aiming to tackle this problem, this paper investigates the layerwise sampling technique for debiasing low-rank projection mechanisms. In particular, an instantiation of the paradigm gives rise to a novel and unbiased low-rank optimization method built upon GaLore's mechanism and the Muon algorithm, named GaLore Unbiased with Muon (GUM). We theoretically prove our method matches the convergence guarantees of the base Muon algorithm while preserving the memory efficiency of low-rank techniques. Empirical experiments on LLM fine-tuning and pretraining also demonstrate non-trivial improvements over GaLore and even better performance than full-parameter training. Further investigation shows that the improvement of this technique comes from a more uniform distribution of knowledge inside layers, leading to more efficient utilization of the model parameter space and better memorization.
\end{abstract}

\section{Introduction}

\begin{wrapfigure}{r}{0.5\textwidth}
  \centering 
   \vspace{-0.15in}
   \includegraphics[width=0.5\textwidth]{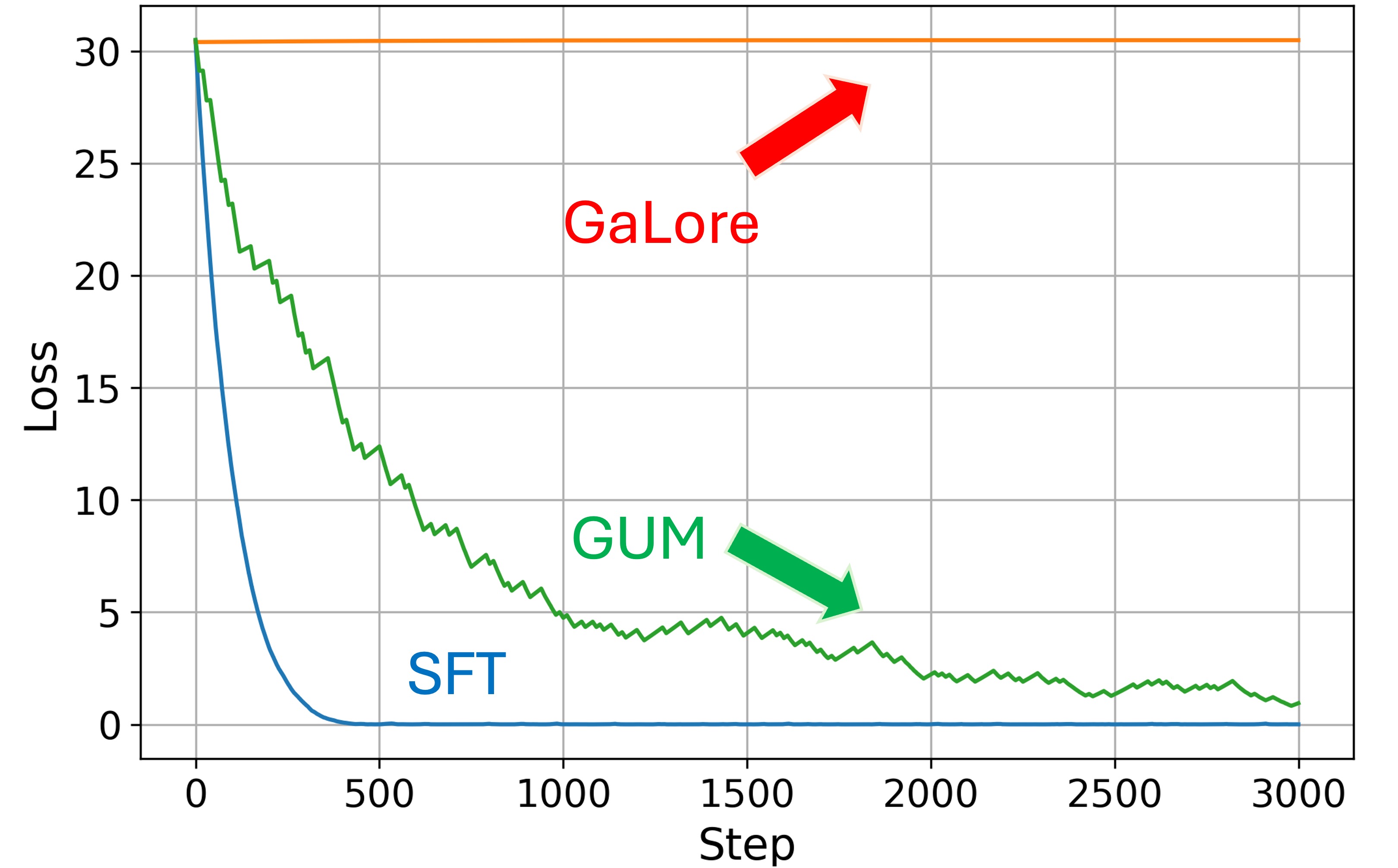}
  \vspace{-0.2in}
  \caption{A counterexample of GaLore in linear regression with Muon optimizer~\citep{jordan2024muon}, where its debiased version GUM converges while GaLore fails to converge.}
  \label{fig:when-galore-fails}
\end{wrapfigure}

Large language models (LLMs) have demonstrated impressive performance across a diverse range of tasks, including conversation~\citep{ouyang2022instructgpt,grattafiori2024llama3}, mathematical reasoning~\citep{guo2025deepseekr1}, and agentic applications~\citep{qin2025uitars}. The advancement of these powerful LLMs demands substantial GPU memory due to the large size of the underlying models. For example, training a 70B model with full parameters requires approximately 1.2 terabytes of GPU memory, which exceeds the capacity of even 8$\times$H100 GPUs.

To address this issue, memory-efficient training techniques such as GaLore~\citep{zhao2024galore} have been introduced. GaLore projects gradients into a low-rank space, reducing the memory footprint of optimizer states during training. Specifically, it employs the top-$r$ components from Singular Value Decomposition (SVD) to define a compact low-rank space, into which the gradients are projected as $R_t \leftarrow P_t^\top G_t$. The optimization step is then performed in this low-rank space, enabling memory savings for the optimizer states. For example, the first and second moments in Adam~\citep{kingma2014adam} are updated using $\tilde{M}_t \leftarrow \beta_1 \tilde{M}_{t-1} + (1-\beta_1) R_t$ and $\tilde{V}_t \leftarrow \beta_2 \tilde{V}_{t-1} + (1-\beta_2) R_t$, where the low-rank projected gradient $R_t$ replaces the original gradient $G_t$. After the optimization step, the parameter update is projected back to the original space.


\begin{algorithm}[t]
   \caption{Low-rank projection based gradient descent algorithms}
   \label{alg:low_rank_project_paradigm}
\begin{algorithmic}[1]
    \STATE {\bfseries Input:} Initial weight $W_{0} \in \RR^{m \times n}$ (suppose $m \le n$), number of iterations $K$, learning rate $\eta$, projection rank $r$.
    \FOR{$t=0$ {\bfseries to} $K-1$}
    \STATE \quad 
    $G_{t} = G(W_{t})$ \COMMENT{Obtain the gradient at $W_t$}
    \STATE \quad $P_{t} \gets \texttt{get\_projector}()$ \COMMENT{Obtain the projector $P_t \in \RR^{m \times r}$}
    \STATE \quad 
    $\tG_{t} = P_t^\top G_{t}$ \COMMENT{Obtain projection of the gradient in the low-rank space}
    \STATE \quad $S_{t} \gets$ \texttt{optimizer.update\_state}($\tG_{t}$)
    \COMMENT{Run the base algorithm with $\tG_t$}
    \STATE \quad 
    $W_{t+1} = W_{t} - \eta P_t S_{t}$
    \COMMENT{Project the update back and update the weights}
    \ENDFOR
\end{algorithmic}
\end{algorithm}

Nevertheless, most low-rank optimization methods introduce biased gradient estimations during training~\citep{muhamed2024grass, zhang2024loge, he2024subspace, huang2025gradnormlorp}, which can lead to suboptimal convergence behavior and measurable performance gaps compared to standard full-parameter training. These biases arise because low-rank projections, while computationally and memory efficient, do not fully preserve the direction and magnitude of the true gradient, especially in high-dimensional parameter spaces. As a result, the optimization trajectory diverges from that of full-precision training, potentially causing slower convergence, reduced final model quality, or instability in certain regimes~\citep{zhao2024galore,ding2022delta,zhang2024loge,huang2025gradnormlorp}. This limitation is particularly critical when pre-training large language models (LLMs), where even small discrepancies in gradient estimation can propagate and amplify across many layers and iterations.

To address this fundamental issue, we investigate the general debiasing technique using layerwise sampling~\citep{pan2024lisa}, which preserves the memory efficiency of training methods via randomly freezing most of the layers. Specifically, the unique strength of layerwise sampling over the typical low-rank projected algorithms of GaLore is analyzed both theoretically and empirically. The introduction of the debiasing technique into GaLore gives rise to a new algorithm called \textbf{Galore Unbiased with Muon (GUM)}, which demonstrates much better convergence guarantees and practical performance in LLM training tasks. We summarize our major contributions as follows:
\begin{itemize}
\item We investigate the layerwise-sampling debiasing technique and propose a novel algorithm called \textbf{G}aLore \textbf{U}nbiased with \textbf{M}uon (\textbf{GUM}), which unifies the strengths of GaLore and Muon. GUM achieves the same theoretical convergence guarantees as Muon while retaining the memory efficiency of GaLore, enabling scalable and effective training of large models.
\item Empirical experiments in LLM training demonstrate that GUM consistently outperforms GaLore in instruction-following, mathematical reasoning, and commonsense reasoning tasks under the same memory budget. Surprisingly, in LLM pre-training experiments, GUM even outperforms full-parameter trained AdamW by a non-trivial overall accuracy margin of 0.3\%-1.1\%, while obtaining on-par or better performance than AdamW in 6 out of 7 tasks. 
\item We analyze the underlying reasoning of GUM's empirical improvements, discovering that its high-rank update nature leads to a larger overall stable rank and more evenly distributed singular values in model weights, which further induce a more long-tailed activation pattern in trained models. This implies the performance gain is brought by more efficient utilization of the model parameter space, in other words, better memorization.
\end{itemize}

\section{Related Work}

\textbf{Parameter-Efficient Algorithms in Practice. } Parameter-Efficient Fine-Tuning (PEFT) methods are widely adopted for training large-scale LLMs in practice. A typical approach is LoRA~\citep{hu2022lora}, which freezes the original model and attaches a small trainable low-rank adapter, thereby reducing memory consumption and improving training efficiency. However, LoRA has been reported to exhibit a non-trivial performance gap compared to full-parameter training~\citep{ding2022delta,lialin2023relora}, due to its altered parameter space. These changes in parameter space also introduce theoretical challenges in analyzing LoRA’s convergence properties with respect to the original parameter space. To address the aforementioned deficiencies and extend LoRA to larger-scale training settings, GaLore~\citep{zhao2024galore} proposes a different approach, which projects the gradients—rather than the parameters—into low-rank spaces. In doing so, the error between the full gradients and the approximated gradients becomes numerically quantifiable, as they now operate within the same parameter space.

Following GaLore, a number of low-rank projection-based algorithms have emerged, where the key component follows a similar paradigm to Algorithm~\ref{alg:low_rank_project_paradigm}, but with different projection matrices $P$. GaLore utilizes the top-$r$ entries $U[:, :r]$ from SVD, which is computationally expensive. To address this issue, GRASS~\citep{muhamed2024grass} derives a sparse projection matrix $P$ based solely on the row norms of the gradients. Specifically, each projection entry is sampled from a multimodal distribution proportional to the row norms. GRASS has been reported to achieve performance comparable to GaLore with lower computational cost, though no theoretical guarantees have been provided regarding its convergence. LoGE~\citep{zhang2024loge} obtains the low-rank projection $P$ by decomposing the original weight matrix $W = BC$, thereby implicitly allowing the backward gradient to be low-rank. However, it is difficult to guarantee theoretical convergence due to the empirical nature of the low-rank decomposition. GradNormLoRP~\citep{huang2025gradnormlorp} combines ideas from LoRA and GaLore, resulting in a two-level projection $P$ that further enhances memory efficiency and reduces training cost. A variety of salience-aware sparse projections are also employed in~\citep{guo2020diffpruning,sung2021fishmask,ansell2021composable,das2023unified,liu2024refining}, each using different saliency metrics.

Despite the strong empirical performance across various practical settings, most of the aforementioned methods lack guarantees regarding their theoretical convergence rates, which can be attributed to the biasedness of the projected gradients. To bridge this gap, we investigate the debiasing technique of layerwise-sampling that compensates for the errors introduced by low-rank projected updates, aiming to improve their theoretical convergence guarantees while maintaining practical memory efficiency.

\textbf{Unbiased Optimization Methods. } The research on unbiased methods is an important part of the optimization field, especially for distributed and memory-efficient optimization. This includes methods of unbiased quantization~\citep{alistarh2017qsgd,suresh2017distributed,wang2022theoretically} and unbiased sparsification~\citep{wangni2018gradient,stich2018sparsified,wang2018atomo}. The unbiased property of these methods enables low communication/memory burden while maintaining guaranteed convergence.
For the recently popular low-rank projection-based methods, 
Fira~\citep{chen2024firaachievefullranktraining} provides an attempt to involve full-rank information by adding a scaled gradient projection to the update, but without a rigorous theoretical justification of the approach.
GoLore~\citep{he2024subspace} is probably the closest to building an unbiased algorithm. However, they employ a totally random projection matrix for the algorithm to enable the convergence guarantee, which may fail to capture the loss landscape properties and lead to slow convergence.

\textbf{Muon Optimizer. } Muon~\citep{jordan2024muon} is a novel optimizer proposed recently, which is gaining rapidly increasing attention because of its great potential in training large foundation models~\citep{liu2025muon,team2025kimi}, empirically outperforming AdamW on specific large-scale tasks. On the theoretical side, \citet{an2025asgo,li2025note} proves its non-convex deterministic and stochastic convergence, respectively, showing a strong theoretical guarantee for the optimizer.


\section{Algorithm}\label{sec:alg}

\subsection{GaLore Unbiased with Muon}
{
As previously shown in Algorithm~\ref{alg:low_rank_project_paradigm}, the core of low-rank gradient methods is to only store the low-rank projected optimizer states, i.e., related to $\tG_t \in \RR^{m\times r}$, which is then projected back to the weight space by multiplying $P_t$ to update the weight $W_t$. The update conceptually shares similarities with running the base optimizer using low-rank projected gradients $P_t P_t^\top G_t$ instead of $G_t$.

This inspires the key idea of \textit{debiasing}, that is, to compensate for biased errors introduced by the low-rank projection $P_t P_t^\top G_t$. 
To implement this while retaining memory efficiency, we refer to the main idea of LISA~\citep{pan2024lisa}, which allows some of the blocks to be sampled uniformly with probability $q$ in each period. This compensated full-rank updates use $G_t - P_t P_t^\top G_t$, while other blocks still do the original low-rank update. By carefully balancing the scaling constants for the two different updates, the biased low-rank term can be canceled out in expectation, resulting in an unbiased estimation of gradients across iterations. Due to page limit, we present this general unbiased algorithm paradigm in Algorithm~\ref{alg:general_unbias_alg} in Appendix~\ref{appendix:general_unbiased_algorithm_paradigm}.
}

For a practical instance of this paradigm, we consider applying GaLore as the low-rank projection method and Muon as the base algorithm, 
which gives birth to our proposed optimization algorithm, called \textbf{G}aLore \textbf{U}nbiased with \textbf{M}uon (\textbf{GUM}), as presented in Algorithm~\ref{alg:unbiased_low_rank_galore}.

\begin{algorithm}[t]
   \caption{\textbf{G}aLore \textbf{U}nbiased with \textbf{M}uon (\textbf{GUM})} 
   \label{alg:unbiased_low_rank_galore}
\begin{algorithmic}[1]
    \STATE {\bfseries Input:} $\{W_{0,\ell} \in \RR^{m_\ell\times n_\ell}\}$ with each $\ell$ corresponding to the $\ell$-th block of parameters, number of blocks $N_L$, sampling period $K$, rank $r$ for each layer, full-rank update layer number $\gamma$
    \FOR{$t=0$ {\bfseries to} $T/K-1$}
    \STATE \quad \textbf{for} $\ell = 1$ \textbf{to} $N_L$ \textbf{do}
    \STATE \quad \quad Initialize $R_{t,0,\ell} = 0$ \COMMENT{Restart the momentum to clear memory}
    \STATE \quad \quad
    $G_{t,0,\ell} = G(W_{t,\ell}) $ \COMMENT{Obtain the gradient of the $\ell$-th layer at $W_t$}
    \STATE \quad \quad
    $U_{t,\ell}, S_{t,\ell}, V_{t,\ell} = \texttt{SVD}(G_{t,0,\ell})$ \COMMENT{Compute SVD of gradient obtained at $W_{tK}$}
    \STATE \quad \quad
    $P_{t,\ell} = U_{t,\ell}[:,:r]$ \COMMENT{Obtain GaLore projector $P_{t,\ell} \in \RR^{m\times r}$ (suppose $m_\ell \le n_\ell$)}
    \STATE \quad \textbf{end for}
    \STATE \quad Each block $\ell$ is sampled to do full-rank updates with probability $q_{t,\ell} \equiv q = \frac{\gamma}{N_L}$ 
    \STATE \quad \textbf{for} $k = 0$ \textbf{to} $K-1$ \textbf{do} 
    \STATE \quad \quad {Run} \eqref{eq:update_galore_muon_low_rank} {for all blocks sampled to compute low-rank update}
    \STATE \quad \quad  
    {Run} \eqref{eq:update_galore_muon_full_rank_compensate} {for all blocks sampled to compute full-rank update}
    \STATE \quad \textbf{end for}

    \ENDFOR
\end{algorithmic}
\end{algorithm}


In one training process, the algorithm contains separated periods just like the vanilla GaLore and LISA. During each period $t$, each block of parameters is sampled to do full-rank updates with probability $q_{t,\ell}$. In each iteration $k$ in the period, we first compute the projection matrix $P_{t,k,\ell}$ and sample the layers to do full-rank updates in this period. 

If block $\ell$ is sampled to do the low-rank update, we apply the following update adapted from Muon with $G_{t,k,\ell} = G(W_{tK+k,\ell})$ as the gradient of block $\ell$ at iteration $k$ in period $t$:
\begin{align}\label{eq:update_galore_muon_low_rank}
\begin{split}
    R_{t,k,\ell} =& \beta R_{t,k-1,\ell} + \frac{1}{1-q_{t,\ell}}P_{t,\ell}^\top G_{t,k,\ell} \\
    W_{Kt+k+1,\ell} =& W_{Kt+k,\ell} + \eta_{t,k} P_{t,\ell} \space \mathrm{\texttt{NewtonSchulz}}(R_{t,k,\ell})
\end{split}
\end{align}
Note that if we set $q_{t,\ell}=0$, \eqref{eq:update_galore_muon_low_rank} is exactly GaLore with Muon as the base optimizer, which we will refer to as GaLore-Muon.
In terms of memory consumption, we can see that the optimizer states requiring storage are the projection matrix $P_{t,\ell} \in \RR^{m_\ell \times r}$ and $R_{t,k,\ell} \in \RR^{r\times n_\ell}$.
Otherwise, the block is sampled to compute high-rank updates, and the compensated projection update is applied.
\begin{align}\label{eq:update_galore_muon_full_rank_compensate}
\begin{split}
    R_{t,k,\ell} 
    =& \beta R_{t,k-1,\ell} +  \frac{1}{q_{t,\ell}} \left( G_{t,k,\ell} -  P_{t,\ell} P_{t,\ell}^\top G_{t,k,\ell}\right) \\
    W_{tK+k+1,\ell} =& W_{tK+k,\ell} + \eta_{t,k} \space \mathrm{\texttt{NewtonSchulz}}(R_{t,k,\ell})
\end{split}
\end{align}
In this case, $P_{t,\ell} \in \RR^{m_\ell \times r}$ and $R_{t,k,\ell} \in \RR^{m_\ell \times n_\ell}$ are required to be stored. 

Summarizing both cases, the overall memory consumption comparison with the vanilla GaLore-Muon algorithm is obtained, as shown in Table~\ref{tab:memory}. The memory consumption of GUM is higher than that of GaLore when using the same projection rank $r$, due to the use of probabilistic full-rank updates. However, as demonstrated in Section \ref{sec:empirical}, by employing a smaller projection rank $r^\prime$ as a trade-off, the benefits of this additional memory consumption are sufficient to recover the performance loss and even achieve a smaller overall memory footprint.

We can show that this update is unbiased compared to the original Muon update.
\begin{lemma}[GUM is unbiased]\label{lem:unbiased_update}
    A single iteration of Algorithm \ref{alg:unbiased_low_rank_galore} for $W \in \RR^{m\times n}$ is equivalent to 
    \begin{align*}
        \tM^+ =& \beta \tM + \tilde{G} \\
        W^+ =& W - \eta \space \mathrm{\texttt{NewtonSchulz}} (\tM^+)
    \end{align*}
    with $\EE[\tilde{G}] = G \in \RR^{m\times n}$, where $G$ denotes the gradient obtained at $W$.
\end{lemma}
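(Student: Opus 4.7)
The plan is to cast both branches of the sampling step in Algorithm~\ref{alg:unbiased_low_rank_galore} into a common momentum-plus-NewtonSchulz form, identify the stochastic gradient $\tilde G$ in each branch, and then verify $\EE[\tilde G] = G$ by a direct two-term calculation over the Bernoulli choice.

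The first step is to introduce a lifted momentum in the full $\RR^{m\times n}$ space. For a block in the low-rank branch I would define $\tM := P R$, and for a block in the full-rank branch $\tM := R$. Since $R$ is reset to $0$ at the start of each period and the branch choice is fixed throughout a period, this identification is consistent across iterations. Left-multiplying the update \eqref{eq:update_galore_muon_low_rank} by $P$ gives
\[
\tM^+ = P R^+ = \beta\, P R + \tfrac{1}{1-q}\, P P^\top G = \beta\,\tM + \tfrac{1}{1-q}\, P P^\top G,
\]
while \eqref{eq:update_galore_muon_full_rank_compensate} already reads $\tM^+ = \beta\,\tM + \tfrac{1}{q}(G - P P^\top G)$. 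Both branches therefore share the momentum update $\tM^+ = \beta \tM + \tilde G$ with
\[
\tilde G \;=\; \begin{cases} \tfrac{1}{1-q}\, P P^\top G, & \text{w.p. } 1-q, \\[2pt] \tfrac{1}{q}\bigl(G - P P^\top G\bigr), & \text{w.p. } q. \end{cases}
\]

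The second step is to check that the weight update is uniform across branches, i.e.\ that in both cases $W^+ = W - \eta\,\texttt{NewtonSchulz}(\tM^+)$. For the full-rank branch this is literally what the algorithm runs. For the low-rank branch, the algorithm applies $P\cdot\texttt{NewtonSchulz}(R^+)$, so I need the equivariance $P\cdot\texttt{NewtonSchulz}(R^+) = \texttt{NewtonSchulz}(P R^+)$. This follows because $P$ has orthonormal columns (the top-$r$ left singular vectors from the SVD of $G_{t,0,\ell}$), so left-multiplication by $P$ commutes with any odd matrix polynomial in $R^+ (R^+)^\top$ applied from the left; the NewtonSchulz iterate is built from exactly such odd polynomials, so the identity holds at every step (and in the $\texttt{msign}$ limit). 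This equivariance is the one non-trivial step of the proof and is what I expect to be the main obstacle: one has to argue about the concrete implemented iteration rather than only its msign limit, and check that embedding a skinny $R^+$ into the column space of an orthonormal $P$ does not alter the polynomial iterates.

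With both branches reduced to the common form $\tM^+ = \beta \tM + \tilde G$, $W^+ = W - \eta\,\texttt{NewtonSchulz}(\tM^+)$, the claim reduces to a one-line expectation over the Bernoulli sampling:
\[
\EE[\tilde G] \;=\; (1-q)\cdot\tfrac{1}{1-q} P P^\top G \;+\; q\cdot\tfrac{1}{q}\bigl(G - P P^\top G\bigr) \;=\; P P^\top G + G - P P^\top G \;=\; G,
\]
so the debiasing scalings $1/(1-q)$ and $1/q$ precisely cancel the projection bias $P P^\top G$, completing the proof.
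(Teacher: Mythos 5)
Your proposal is correct and follows essentially the same route as the paper: the paper factors the argument through its general Lemma~\ref{lem:unbiased_update_general} (orthonormality of $P$ from the SVD, plus the commutation $\texttt{NewtonSchulz}(PX)=P\,\texttt{NewtonSchulz}(X)$ proved step-by-step on the iteration $X^+=aX+bXX^\top X+cXX^\top XX^\top X$), and then concludes with the same two-term Bernoulli expectation $(1-q)\cdot\tfrac{1}{1-q}PP^\top G+q\cdot\tfrac{1}{q}(G-PP^\top G)=G$. Your lifted momentum $\tM=PR$ is just an explicit rendering of that same equivalence, and your equivariance argument is exactly the paper's Property~II verification.
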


This unbiased technique is crucial for the convergence of the algorithm. As we will see in the next subsection, GUM can recover similar convergence properties as the original Muon algorithm, regardless of the employed projection matrix. This demonstrates substantial theoretical advantages over the original biased GaLore-Muon algorithm.

\section{Convergence Analysis of GUM}\label{sec:convergence_analysis}
In this section, we present the convergence analysis of GUM. We consider the following assumptions for the minimization problem $\min_{W\in \RR^{m\times n}} f(W)$ with $m \le n$.
\begin{assumption}[Lower bounded]\label{asm:lower_f_bound}
    There exists $f^* > -\infty$ such that $f(W) \ge f^*$ for all $W \in \RR^{m\times n}$.
\end{assumption}
\begin{assumption}[Smoothness]\label{asm:smooth}
    $f$ is $L_{\mathrm{op}}$-smooth with respect to the spectral norm $\Norm{\cdot}_{\mathrm{op}}$, i.e.,
    \begin{align*}
        \Norm{\nabla f(W_1) - \nabla f( W_2)}_* \le L_{\mathrm{op}} \Norm{W_1 - W_2}_{\mathrm{op}} ,
    \end{align*}
    for all $W_1,W_2 \in \RR^{m\times n}$. $\Norm{\cdot}_{\mathrm{op}}$ and $\Norm{\cdot}_*$ denotes the spectral norm and trace norm respectively.
\end{assumption}
\begin{assumption}[Gradient noise]\label{asm:gradient_noise}
    We assume the stochastic gradient $G(W)$ obtained at $W$ is unbiased and there exists a matrix $V\in \RR^{m\times n}$ such that 
    \begin{align*}
        \EE[N(W)] = 0 \quad \mathrm{and} \quad \EE\left[ N(W) N(W)^\top \right] \preceq V V^\top,
    \end{align*}
    where $N(W) \triangleq G(W) - \nabla f(W)$ and $A\preceq B$ denotes that $B-A$ is positive semidefinite.
\end{assumption}
Assumption~\ref{asm:lower_f_bound} is standard in non-convex analysis.
Based on the equivalence between norms, Assumption~\ref{asm:smooth} implies nothing more than the standard smoothness assumption on Frobenius norm, but is more suitable in analyzing GUM or Muon~\citep{jordan2024muon}. Assumption~\ref{asm:gradient_noise} can also imply the standard bounded variance assumption by $\EE[\Norm{N(W)}_{\mathrm{F}}^2] \le \Norm{V}_{\mathrm{F}}^2$. 
The style of these assumptions can be found in previous work on analyzing adaptive methods and Sign-based methods~\citep{bernstein2018signsgd,crawshaw2022robustness,liu2024adagrad,an2025asgo}, where the assumptions are employed for more fine-grained analysis and analyzing the potential benefits of these optimizers.
\begin{assumption}[Exact Newton Schulz]\label{asm:ignore_ns_error}
    We consider the case where the Newton-Schulz iteration computes the exact solution, i.e., $\mathrm{\texttt{NewtonSchulz}}(X) = UV^\top$ with $X = U\Sigma V^\top$ as the SVD of $X$. 
\end{assumption}
Assumption~\ref{asm:ignore_ns_error} is needed for analyzing Muon. As noted in \citet{jordan2024muon,liu2025muon}, though the Newton-Schulz iteration adopted in Muon does not compute the exact $UV^\top$ matrix, it turns out that this error has little influence on the training curve.
Then, based on the assumptions, we can obtain the convergence guarantee for GUM.
\begin{theorem}[Non-convex Convergence]\label{thm:convergence_unbiased_galore_muon}
    Under Assumption~\ref{asm:lower_f_bound}-\ref{asm:ignore_ns_error}, after running a total of $T$ iterations for Algorithm~\ref{alg:unbiased_low_rank_galore} with parameters set as \eqref{eq:parameter_choice}, it holds that
    \begin{align*}
        \min_{0\le s \le T-1} \EE\left[ \Norm{\nabla f(W_{s})} \right] \le \cO\left( \frac{1}{\alpha} \sqrt{\frac{L_{\mathrm{op}} \Delta}{T}} + \left( \frac{L_{\mathrm{op}} \Delta \Norm{V}_*^2}{\alpha^5 T} \right)^{\frac{1}{4}} + \frac{\Norm{V}_*}{\sqrt{\alpha^3 T} } \right) ,
    \end{align*}
    where $\Delta \triangleq f(W_0) - f^*$ and $\alpha \triangleq \min\{ q,1-q \}$.
\end{theorem}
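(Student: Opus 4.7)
The starting point is the spectral-norm descent lemma enabled by Assumption~\ref{asm:smooth}. Writing $O_t := \msign{\tilde M_t}$ under Assumption~\ref{asm:ignore_ns_error}, so that $W_{t+1} = W_t - \eta O_t$ and $\Norm{O_t}_{\mathrm{op}} = 1$, smoothness gives
\begin{align*}
f(W_{t+1}) \le f(W_t) - \eta \dotprod{\nabla f(W_t)}{O_t} + \tfrac{L_{\mathrm{op}} \eta^2}{2}.
\end{align*}
Introducing the momentum error $E_t := \tilde M_t - \nabla f(W_t)$ and combining the duality identity $\dotprod{\tilde M_t}{\msign{\tilde M_t}} = \Norm{\tilde M_t}_*$, the trace-spectral H\"older bound $|\dotprod{E_t}{O_t}| \le \Norm{E_t}_*$, and the reverse triangle inequality yields $\dotprod{\nabla f(W_t)}{O_t} \ge \Norm{\nabla f(W_t)}_* - 2\Norm{E_t}_*$. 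The remainder of the argument reduces to controlling $\EE[\Norm{E_t}_*]$.

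The momentum recursion together with Lemma~\ref{lem:unbiased_update} (which gives $\EE[\tilde G_t \mid \mathcal F_t] = \nabla f(W_t)$) decomposes the error as
\begin{align*}
E_t = \beta E_{t-1} + (\tilde G_t - \nabla f(W_t)) + \beta(\nabla f(W_{t-1}) - \nabla f(W_t)),
\end{align*}
where the drift term is bounded by $L_{\mathrm{op}} \eta$ via smoothness and $\Norm{W_t - W_{t-1}}_{\mathrm{op}} = \eta$. For the mean-zero noise $\tilde G_t - \nabla f(W_t)$ I would treat two independent sources separately: the stochastic gradient noise $N(W_t)$, whose trace norm satisfies $\EE[\Norm{N(W_t)}_*] \le \Norm{V}_*$ by operator concavity of $\sqrt{\cdot}$ combined with Assumption~\ref{asm:gradient_noise}; and the Bernoulli layer sampling, for which an explicit second-moment computation on each block followed by an AM--GM step on the cross-terms gives $\EE[(\tilde G - G)(\tilde G - G)^\top \mid G] \preceq \tfrac{C}{\alpha}(G_1 G_1^\top + G_2 G_2^\top)$ in the positive semidefinite order, with $G_1 = (I - PP^\top)G$ and $G_2 = PP^\top G$. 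A geometric summation over past iterates then produces a bound of the form $\EE[\Norm{E_t}_*] \lesssim \frac{\beta L_{\mathrm{op}} \eta}{1-\beta} + \sqrt{\frac{1-\beta}{\alpha}}\,\Norm{V}_*$, plus a term proportional to $\Norm{\nabla f(W_t)}_*/\sqrt{\alpha}$ that can be absorbed into the left-hand side once $\eta$ is small enough.

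Plugging this into the descent inequality, telescoping $\sum_{t=0}^{T-1}(f(W_t) - f(W_{t+1})) \le \Delta$, dividing by $T$, and optimizing over $\eta$ and $1-\beta$ balances three residual contributions --- the deterministic $L_{\mathrm{op}}\eta$ cost against $\Delta/(\eta T)$, the bias-variance cross term, and the pure noise tail --- which match one-to-one with the three summands of the theorem and fix the parameter choice~\eqref{eq:parameter_choice}. The main technical obstacle is pinning down the correct $\alpha$-dependence in trace norm: because $\Norm{\cdot}_*$ is not an inner-product norm, the two-point layer-sampling variance cannot be handled by a plain scalar second-moment argument, and one must propagate the PSD-order bound through the momentum sum via $\Norm{X}_* = \tr{\sqrt{XX^\top}}$ and an operator Jensen inequality without accumulating dimensional factors; it is this propagation, rather than the telescoping or parameter tuning, that produces the $\alpha^{-5/4}$ and $\alpha^{-3/2}$ scalings appearing in the theorem.
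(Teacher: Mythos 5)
There is a genuine gap, and it sits at the heart of your plan: you fold the layer-sampling/projection randomness into a mean-zero ``noise'' term inside the momentum error $E_t = \tilde M_t - \nabla f(W_t)$ and hope that momentum averaging (a $\sqrt{1-\beta}$ contraction) plus absorption handles it. For Algorithm~\ref{alg:unbiased_low_rank_galore} this does not work, because the Bernoulli draw $\xi_t$ and the projector $P_t$ are fixed for the entire period of $K$ iterations and the momentum is restarted at the start of each period. Within a period the sampling ``noise'' $\tilde G_{t,k} - G_{t,k}$ is therefore perfectly correlated across iterations: with probability $1-q$ one has $\tilde M_{t,k} \propto P_t P_t^\top M_{t,k}$, so $E_{t,k}$ contains a component like $\frac{1}{1-q}P_tP_t^\top \nabla f_{t,k} - \nabla f_{t,k}$, whose trace norm is of the same order as $\Norm{\nabla f_{t,k}}_*$ with a constant prefactor (e.g.\ $q=1/2$ and small rank $r$ gives $\Norm{E_{t,k}}_* \approx \Norm{\nabla f_{t,k}}_*$). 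Your claim that this gradient-proportional term ``can be absorbed into the left-hand side once $\eta$ is small enough'' is therefore false: the prefactor does not shrink with $\eta$ (nor with $1-\beta$, since there is no independent resampling within a period for the momentum to average over), and after the factor $2$ in $\dotprod{\nabla f}{O_t} \ge \Norm{\nabla f}_* - 2\Norm{E_t}_*$ it can wipe out the signal term entirely. Unbiasedness of $\tilde G$ alone is not enough here; some structural use of the projection is required.

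The paper's proof avoids this by never treating the projection as variance. It keeps the projector explicit: within a period, $\msign{\tM_{t,k}} = Q_t\msign{Q_t^\top M_{t,k}}$ with $Q_t \in \{P_t, R_t\}$, where $M_{t,k}$ is the momentum of the \emph{unprojected} stochastic gradients, so the one-step descent yields $-\eta\Norm{Q_t^\top \nabla f_{t,k}}_*$ plus momentum-error terms (Lemma~\ref{lem:one_step_descent}). The variance-contraction step (your $\sqrt{1-\beta}$ argument) is applied only to the genuine gradient noise $N_{t,i}$, which is independent across iterations (Lemmas~\ref{lem:decompose_eps}--\ref{lem:bound_epsilon_norm}). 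The debiasing enters only once, at the period level, through Lemma~\ref{lem:expected_projected_gradient_lhs}: $\EE[\Norm{Q_t^\top \nabla f_{t,0}}_*] \ge \min\{q,1-q\}\,\EE[\Norm{\nabla f_{t,0}}_*]$, which follows from $\Norm{P_t^\top X}_* + \Norm{R_t^\top X}_* \ge \Norm{X}_*$ and independence of $\xi_t$ from $W_{tK}$. The $\alpha^{-1}$, $\alpha^{-5/4}$, $\alpha^{-3/2}$ scalings then come from this lemma combined with the constraints $1-\beta \ge 2/(K\alpha)$ and the choices of $K$ and $\eta$ in \eqref{eq:parameter_choice} --- not, as you suggest, from propagating a PSD-order bound on the sampling variance through the momentum sum. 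To repair your argument you would essentially have to reintroduce this projector-explicit handling of the within-period correlation, at which point you recover the paper's route.
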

The proof can be found in Appendix \ref{appendix:proofs}. The convergence theorem for GUM leads to several important observations. Firstly, when we set $q$ to be an absolute constant, the convergence of GUM matches exactly the convergence rate of Muon. In the deterministic case, it matches the convergence result of Muon proven in \citet{an2025asgo}. When the noise $V$ is the dominant term, it also matches the $\cO(T^{-1/4})$ rate proven in \citet{li2025note}. Moreover, since we use more fine-grained and appropriate assumptions to analyze GUM, Theorem~\ref{thm:convergence_unbiased_galore_muon} shows an even better dimensional dependence than \citet{li2025note}. This consistency shows the power of the unbiased design, maintaining the memory reduction of gradient low-rank methods without sacrificing the convergence guarantee.

As noted by \citet{he2024subspace}, GaLore using SGD with momentum (SGDM) as the base algorithm converges in the deterministic non-convex setting, but can possibly diverge when the gradient noise is large. We also empirically examine an extreme counterexample where GaLore-Muon doesn't converge at all in Section~\ref{sec:empirical}. Clearly, GUM fixes this problem. 
GoLore~\citep{he2024subspace} is also designed to correct the convergence of GaLore. However, though GoLore shows a good convergence guarantee when the base algorithm is SGDM, it employs a thoroughly random projection matrix to do low-rank updates, failing to capture the potential gradient low-rank properties as the GaLore projection matrix does. This can lead to a much slower convergence speed when applied to real training tasks.


\section{Experimental Results}\label{sec:empirical}

\subsection{Synthetic Settings}




To better illustrate how GaLore may fail due to the low-rank projection, we consider the following synthetic noisy problem.

\textbf{Setup.} The settings of the experiment are generally the same as the synthetic experiment in \citet{he2024subspace}. We consider the following noisy linear regression problem.
\begin{align*}
    \min_{X \in \RR^{n\times n}} f(x) \triangleq \frac{1}{2} \Norm{AX}_{\mathrm{F}}^2 + \dotprod{B}{X}, \quad \nabla f(X;\xi) = \nabla f(X) + \xi \sigma C,
\end{align*}
where $A = \begin{bmatrix}
    I_{n-r} & 0
\end{bmatrix} \in \RR^{(n-r) \times n}$, $B = \begin{bmatrix}
    D & 0 \\ 0 & 0
\end{bmatrix} \in \RR^{n\times n}$ with $D \in \RR^{(n-r) \times (n-r)}$ a Gaussian random matrix, $C = \begin{bmatrix}
    0 & 0 \\ 0 & I_r
\end{bmatrix} \in \RR^{n\times n}$, $\xi$ is a random variable with probability $0.5$ to be $1$ and probability $0.5$ to be $0$, and $\sigma$ is a constant controlling the noise level. It is straightforward to verify that this is a smooth and convex optimization problem, with bounded gradient variance. 

In our experiment, we specifically set $n=20$, $r=12$, $\sigma=100$ to construct a small-scale but noisy problem.
For the vanilla (biased) GaLore Muon algorithm, we set the projection rank to be $12$ as well. For GUM, we set $r = 2$ and $q_{t,\ell} = 0.5$. We can see that in this case, the memory footprints of the two algorithms are the same.

\begin{wraptable}{r}{0.5\textwidth}
    \centering
    \vspace{-0.3in}
    \caption{\textbf{Space complexity} comparison between GaLore and GUM for a block $W \in \RR^{m\times m}$ with $r^\prime < r\le m$ respectively. GUM uses a full-rank update with probability $q \in [0, 1]$, where the memory GUM has the same memory consumption when $q=2 (r - r^\prime) / (m - r^\prime)$.}
    \vspace{0.1 in}
    \begin{tabular}{lc}
    \toprule
    Method & Space Complexity \\ \midrule
    GaLore & $\cO(2m r)$ \\ \midrule
    GUM & $\cO((2-q) mr^\prime + q m^2)$ \\ \midrule
    SFT & $\cO(m^2)$ \\
    \bottomrule
    \end{tabular}
    \label{tab:memory-comparison-theoretical}
\end{wraptable}

\textbf{Results.} The convergence result is shown in Figure~\ref{fig:when-galore-fails}. We adjust the minimum loss to $0$ to better visualize the difference. As we can see, GaLore fails to converge at all, while GUM converges to a comparable accuracy with the full-parameter Muon baseline. The experiment shows a clear benefit of the unbiased method, at least in noisy settings.

Here is a more detailed analysis of why these conditions lead to GaLore's failure. In this synthetic problem, the noise level is set to be large and has rank $r = 12$, which is equal to the projection rank of GaLore. Since the noise is in a dominant position, every time the $r$ largest singular values of the stochastic gradient $\nabla f(X;\xi)$ come from the noise, so do the corresponding singular vectors and the GaLore projection matrix. This meaningless projection makes the training process not even take a single effective step towards solving the problem. Therefore, this synthetic experiment shows an extreme case in which GaLore can fail when the gradient noise is large. Also, the experiment shows that GUM fixes the non-convergence problem with the same memory cost as GaLore-Muon.

\subsection{LLM Fine-tuning Settings}
\label{sec:exp:sft}

To verify the empirical effectiveness of the proposed algorithm in practice, we compare GUM with GaLore in LLM fine-tuning settings. 

\textbf{Setup.} 
The performance of the fine-tuned models is evaluated on two types of tasks: 1) IFEval \citep{zhou2023instructionfollowingevaluationlargelanguage}, an instruction-following benchmark that assesses models' adherence to explicit, verifiable instructions, and 2) GSM8K \citep{cobbe2021training}, a mathematical reasoning benchmark that evaluates models' problem-solving skills in grad-school level math questions.

For model choices, LLaMA3-8B~\citep{grattafiori2024llama3herdmodels}, Qwen2.5-7B~\citep{qwen2025qwen25technicalreport}, and Gemma2-9B~\citep{gemmateam2024gemma2improvingopen} are adopted, which are commonly used in practical applications.

For training datasets, GPT-4-LLM is adopted on the instruction-following tasks of IFEval, which consists of 54.6K high-quality GPT-4-generated instruction-response pairs across various instruction categories. As for the mathematical reasoning task of GSM8K, a 2K-sized high-quality mixture~\footnote{The dataset is from \url{https://huggingface.co/datasets/HanningZhang/scalebio_distill_qwen_math}, generated using the same setting as Appendix A.2 of~\citep{pan2024scalebio}.} from DART-Math \citep{tong2024dartmath}, Ultra-Interact \citep{yuan2024advancing}, MathInstruct \citep{yue2023mammoth}, and Orca-Math \citep{mitra2024orcamath} is employed, which allows strong models such as Qwen-2.5-7B to still obtain reasonable improvements after fine-tuning.

For hyperparameters, we adopt a rank of 512 for GaLore and 2 + 128 for GUM. The baselines include Full-parameter Training with Muon \citep{jordan2024muon} (FT-Muon), Full-parameter Training with AdamW \citep{loshchilov2019decoupledweightdecayregularization} (FT-AdamW), and Gradient Low-Rank Projection (GaLore) \citep{zhao2024galore}, where further details are available in Appendix \ref{sec:training-setup}.

\begin{table}[!t]
\centering
\caption{\textbf{LLM Fine-tuning Results.} Trained models are evaluated on IFEval (instruction-following) and GSM8K (mathematical reasoning). All experiments are conducted on a single H100 GPU.}
\footnotesize
\vspace{2mm}
\resizebox{\textwidth}{!}{
\begin{tabular}{lclccc}
\toprule
\multirow{3}{*}{\textbf{Model}} & \multirow{3}{*}{\makecell{\textbf{Memory}\\ \textbf{ Efficient}}} & \multirow{3}{*}{\textbf{Method}} & \multicolumn{2}{c}{\textbf{IFEval}} &
\multicolumn{1}{c}{\textbf{GSM8K}}
\\
\cmidrule(lr){4-6}
& & & \makecell{Prompt-level\\Strict-Accuracy} & \makecell{Prompt-level\\Loose-Accuracy} & 
\makecell{Accuracy}
\\
\midrule
\multirow{4}{*}{\textsc{LLaMA-3-8B}} 
    &\multirow{2}{*}{\xmark} & FT-AdamW    & 23.66 & 25.14 & 57.39
    \\
    && FT-Muon  & 23.11 & 26.06 & 57.65
    \\
    \cdashline{2-6}\noalign{\vskip 0.5ex}
      & \multirow{3}{*}{\cmark} & GaLore  & 21.07 & 22.74 & 57.38 \\
          && Fira & 21.81 & 23.73 & 56.41  \\
    && GUM & \bf 22.37 & \bf 24.03 & \bf 58.45  \\
\midrule
\multirow{4}{*}{\textsc{Qwen-2.5-7B}} 
    &\multirow{2}{*}{\xmark} & FT-AdamW      & 35.12 &  39.74 & 85.75 \\
    && FT-Muon & 34.38 & 39.19 & 85.90
    \\
    \cdashline{2-6}\noalign{\vskip 0.5ex}
    &\multirow{3}{*}{\cmark} & GaLore  & 33.09 &  37.71 & 86.28       \\
    && Fira & 32.35 & 36.04 & 86.81     \\
    && GUM &\bf 33.46 & \bf 38.82 & \bf 86.81     \\
\midrule
\multirow{4}{*}{\textsc{Gemma-2-9B}} 
    &\multirow{2}{*}{\xmark} & FT-AdamW      &  OOM & OOM & OOM  \\
    && FT-Muon & 28.47 &  32.16 & 76.92
    \\
    \cdashline{2-6}\noalign{\vskip 0.5ex}
    &\multirow{3}{*}{\cmark} & GaLore  &   30.31 & 33.64 & 77.18           \\
    && Fira &  29.21 & 33.64 & 75.44     \\
    && GUM &  \bf 33.27  & \bf 36.60 & \bf 77.48     \\
\bottomrule
\end{tabular}
}
\label{tab:exp-sft}
\vspace{-0.1cm}
\end{table}

\begin{wraptable}{r}{0.5\textwidth}
\centering
\vspace{-0.275in}
\small
\caption{\textbf{Peak GPU memory usage} across different model architectures and configurations, emphasizing the variations among them. As specified in the table, the GUM configuration 2 + 128 involves updating two layers with full-rank gradients, while all other layers are updated with low-rank gradients of rank $r=128$.}
\vspace{2mm}
\begin{tabular}{lccc}
\toprule
\textbf{Model} & {\textbf{GaLore}} & \multicolumn{2}{c}{\textbf{GUM  Layers + Rank}} \\
\cmidrule(lr){3-4} 
& 512 & 4 + 128 & 2 + 128  \\
\midrule
LLaMA-3-8B        
& 42G   &  41G   & \bf 40G   \\
Qwen-2.5-7B      
& 41G     & 40G   & \bf 39G    \\
Gemma-2-9B       
& 47G     & 46G   & \bf 44G   \\
\bottomrule
\end{tabular}
\label{tab:memory}
\end{wraptable}

\textbf{Memory Efficiency.} We conducted peak GPU memory experiments to evaluate GUM’s memory efficiency, demonstrating its comparable or reduced memory footprint relative to GaLore. Specifically, we focus on two key hyperparameters: the rank and the number of selected layers for full-rank updates in GUM. To ensure a fair comparison, all methods used a consistent mini-batch size of 1, without employing additional GPU memory-saving techniques such as offloading \citep{offload} or flash attention \citep{dao2022flashattention, dao2024flashattention}.

As shown in Table~\ref{tab:memory}, the GUM configuration reaches comparable or better memory consumption than GaLore. This improvement is not limited to a single case; consistent memory savings are observed across multiple model architectures.

\textbf{Results.} As shown in Table~\ref{tab:exp-sft}, GUM consistently outperforms GaLore in both tasks, highlighting its robustness and general effectiveness.

A closer look at GSM8K results reveals that GUM achieves notable improvements and even outperforms full-parameter training methods, suggesting its strength in enhancing reasoning capabilities. In Section~\ref{sec:exp:interpretation}, it will be revealed that this improvement is very likely to have originated from its improvements in memorization, especially when the learned activations are required to be long-tailed.

\subsection{LLM Pre-training Settings}
\label{sec:exp:llm-pretrain}

To provide stronger evidence for validating the effectiveness of GUM, a standard pre-training setting is introduced to compare different training methods' performance.

\textbf{Setup.} To evaluate the improvements in commonsense reasoning, the following downstream tasks are employed: ARC~\citep{clark2018arc}, OpenBookQA~\citep{mihaylov-etal-2018-suit}, HellaSwag \cite{zellers-etal-2019-hellaswag}, PIQA \citep{bisk-etal-2020-piqa}, SIQA \citep{sap-etal-2019-social}, and WinoGrande \citep{sakaguchi-etal-2021-winogrande}, which are common choices for LLM pre-training~\citep{hoffmann2022training,groeneveld2024olmo,zhang2024tinyllama}. For model choice, following the standard setting in~\citet{zhao2024galore}, the experiments covered three model sizes—60M, 130M, and 350M parameters of LLaMA. For training datasets, we employ the widely-used C4 corpus~\citep{c4data} under configurations guided by the Chinchilla scaling law \citep{hoffmann2022trainingcomputeoptimallargelanguage}: 1.5B tokens for 60M, 2B tokens for 130M, and 7B tokens for 350M. For baselines, in addition to methods in Section~\ref{sec:exp:sft}, we include Fira \citep{chen2024firaachievefullranktraining}, an optimizer designed to achieve full-rank training under low-rank constraints for improved efficiency. Further details are available in Appendix \ref{sec:pretraining-setup}.

\textbf{Results.} The performance comparison presented in Table~\ref{tab:pretrain} clearly indicates that GUM achieves consistently better results than GaLore and, more surprisingly, even full-parameter training methods like AdamW and Muon. This improvement can largely be attributed to the unbiased low-rank update mechanism employed in GUM. The mechanism captures long-tailed gradient updates distributed across layers and thereby enhances model memorization.

\begin{table}[!t]
\centering
\small
\caption{\textbf{LLM Pre-training Results.} Trained models are evaluated on seven widely adopted commonsense reasoning tasks. All experiments are conducted on H100 GPUs.}
\vspace{2mm}
\resizebox{\textwidth}{!}{
\begin{tabular}{llcccccccc}
\toprule
\textbf{Model} & \textbf{Method} & \textbf{ARC-E} & \textbf{ARC-C} & \textbf{OBQA} & \textbf{HellaSwag} & \textbf{PIQA} & \textbf{SIQA} & \textbf{Winogrande} & \cellcolor{lightblue}\textbf{Avg.}\\
\midrule
\multirow{5}{*}{LLaMA-60M}
          & AdamW & 32.87 & 17.92 & 12.68 & 26.70 & 58.87 & 35.88 & 50.12 & \cellcolor{lightblue}33.58\\
          & Muon  & 36.45 & 17.92 & 12.88 & 26.89  & 59.79 & 35.82 & 51.22 & \cellcolor{lightblue}34.42 \\
          \cdashline{2-10}\noalign{\vskip 0.5ex}
          & GaLore & 35.35 & 17.92 & 12.47 & 26.74 & 59.63 & 35.62 & 49.88 & \cellcolor{lightblue}33.94 \\
          & Fira & 35.02 & 18.94 & 12.27 & 26.75 & 58.71 & 36.24 & 50.28 & \cellcolor{lightblue}34.03 \\
          & \textbf{GUM}  & 36.28 & 17.41 & 13.68 & 26.70 & 60.12 & 36.54 & 51.85 & \cellcolor{lightblue}\textbf{34.65}\\
\midrule
\multirow{5}{*}{LLaMA-130M}
          & AdamW  & 37.08 & 18.86 & 13.48 & 27.04 & 59.14 & 36.18 & 51.07 & \cellcolor{lightblue}34.69\\
          & Muon  & 38.34 & 18.00 & 13.08 & 27.67 & 62.68 & 37.00 & 49.33 & \cellcolor{lightblue}35.16 \\
          \cdashline{2-10}\noalign{\vskip 0.5ex}
          & GaLore & 36.49 & 18.00 & 13.28 & 27.08 & 60.34 & 35.36 & 50.20 & \cellcolor{lightblue}34.39\\
          & Fira & 26.01 & 19.54 & 12.27 & 26.13 & 53.65 & 34.19 & 49.80 & \cellcolor{lightblue}31.66\\
          & \textbf{GUM}  & 38.01 & 18.34 & 14.69 & 27.32 & 61.26 & 36.44 & 52.49 & \cellcolor{lightblue}\textbf{35.51}\\
\midrule
\multirow{5}{*}{LLaMA-350M} 
          & AdamW  & 44.02 & 18.77 & 14.08 & 30.04 & 64.42 & 37.97 & 50.51 & \cellcolor{lightblue}37.12\\
           & Muon  & 44.91 & 18.69 & 17.10 & 31.05 & 65.72 & 37.87 & 51.93 & \cellcolor{lightblue}38.18 \\
           \cdashline{2-10}\noalign{\vskip 0.5ex}
          & GaLore & 43.10 & 18.52 & 14.89 & 29.09 & 62.19 & 37.10 & 52.01 & \cellcolor{lightblue}36.58 \\
          & Fira & 42.38 & 18.77 & 15.49 & 29.27 & 63.00 & 37.97 & 51.85 & \cellcolor{lightblue}36.96 \\
          & \textbf{GUM}  & 44.44 & 19.80 & 15.69 & 29.28 & 64.53 & 38.13 & 51.38 & \cellcolor{lightblue}\underline{37.42}\\
\bottomrule
\end{tabular}
}
\label{tab:pretrain}
\end{table}

\subsection{Understanding the Effect of Layerwise Sampling}
\label{sec:exp:interpretation}

In this section, we investigate the underlying reason why the proposed algorithm of GUM can yield empirical improvements over GaLore. In short, GUM's high-rank gradient update leads to a more uniform singular value distribution in model parameters, which further results in more evenly distributed activations for input samples. This implies the long-tailed knowledge is better preserved in GUM-trained models, yielding better memorization.

\textbf{Setup.} We adopt the model of LLaMA-130M and benchmark of ARC-E~\citep{clark2018arc}, while keeping other settings the same as in Section~\ref{sec:exp:llm-pretrain}.

\textbf{Results.} As shown in Figure~\ref{fig:stablerank}, the overall stable ranks $\EE\left[\|M\|^2_F/\|M\|^2_2\right]$ of GaLore and GUM are positively correlated with their performance in ARC-E, which provides direct evidence that higher stable ranks are generally beneficial for improving commonsense reasoning.

On top of that, it is observed in Figure~\ref{fig:param_activation_distribution} that GUM not only improves the overall stable rank of the trained model, but also shapes a set of more evenly distributed singular values in trained models, which further leads to more long-tail distributed activation across all modules. This provides indirect evidence and an intuitive explanation for the performance improvements in ARC-E: instead of overusing a low-dimensional space or a limited number of modules, GUM-trained models demonstrate a tendency to evenly distribute knowledge across all dimensions and modules, implying better memorization. Additional evidence is available in Appendix~\ref{appendix:detailed-singular-value-distribution}.

\begin{figure}[t]
    \centering
    \includegraphics[width=0.84\linewidth]{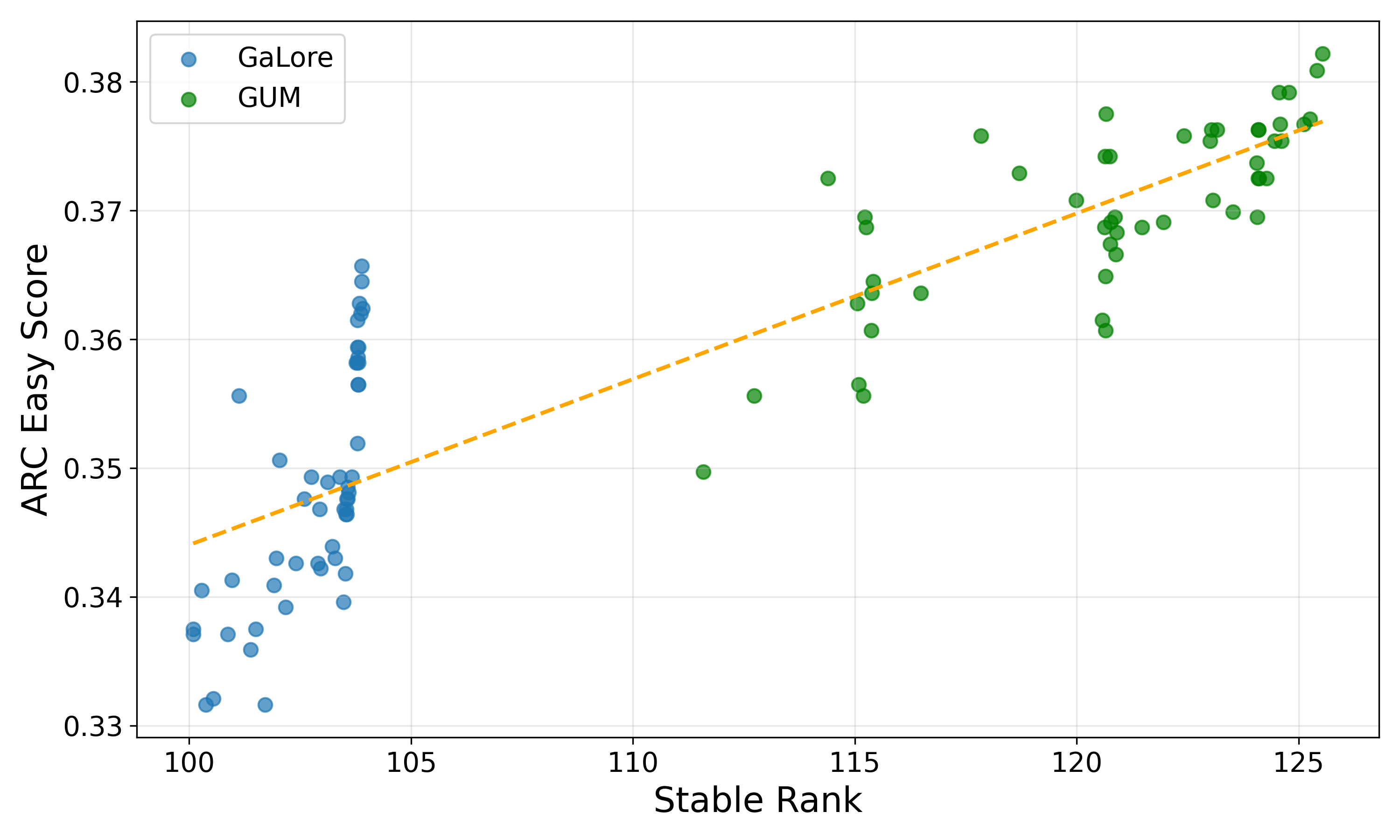}
    \caption{\textbf{Higher Stable Rank $\rightarrow$ Better Performance.} A positive correlation is observed between the overall stable rank $\EE\left[\|M\|^2_F/\|M\|^2_2\right]$ and ARC Easy score. Each dot represents a checkpoint during pre-training after 1,000 steps, saved every 20 steps.}
    \label{fig:stablerank}
\end{figure}

\begin{figure}[h]
    \centering
    \includegraphics[width=0.45\linewidth]{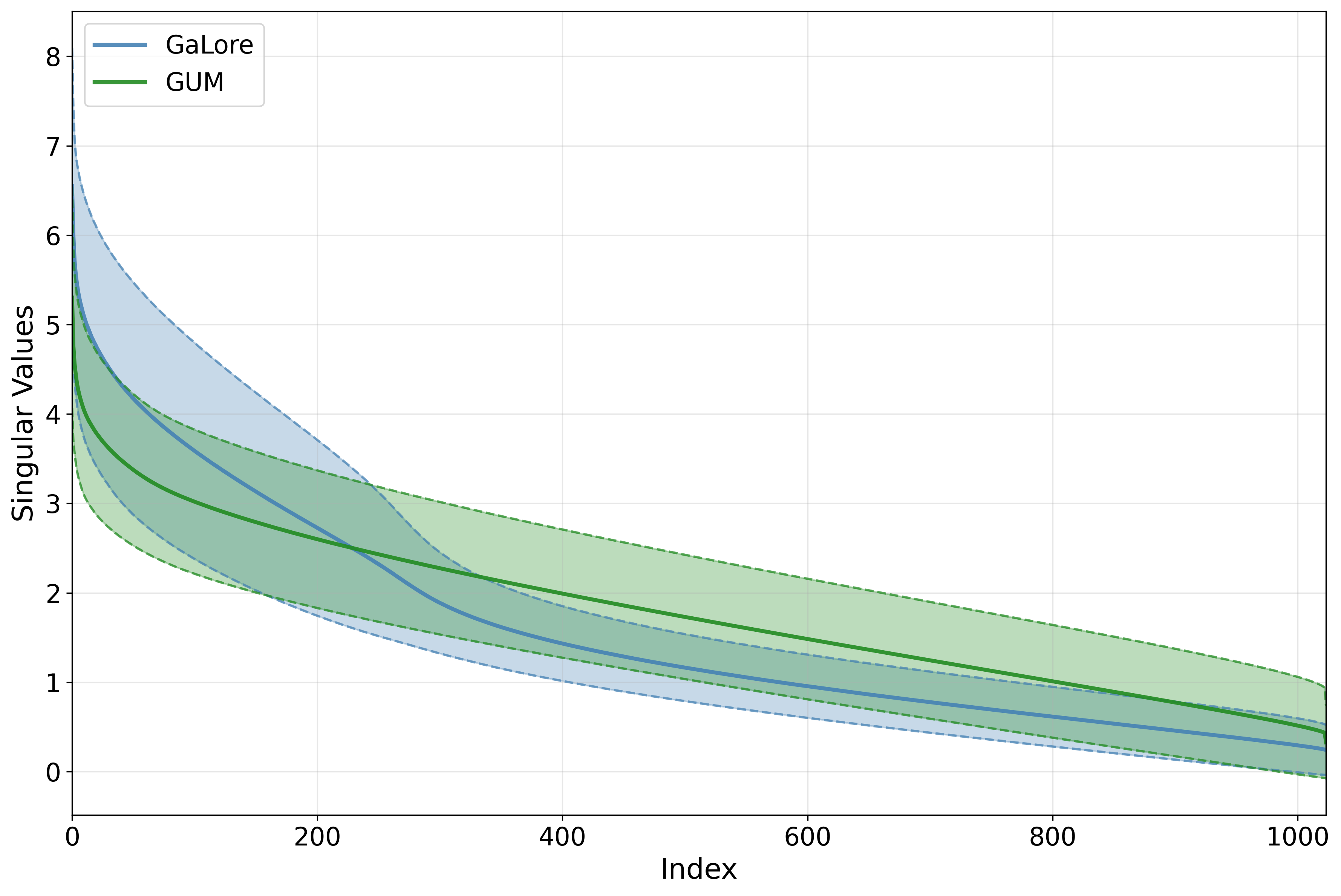}
    \includegraphics[width=0.53\linewidth]{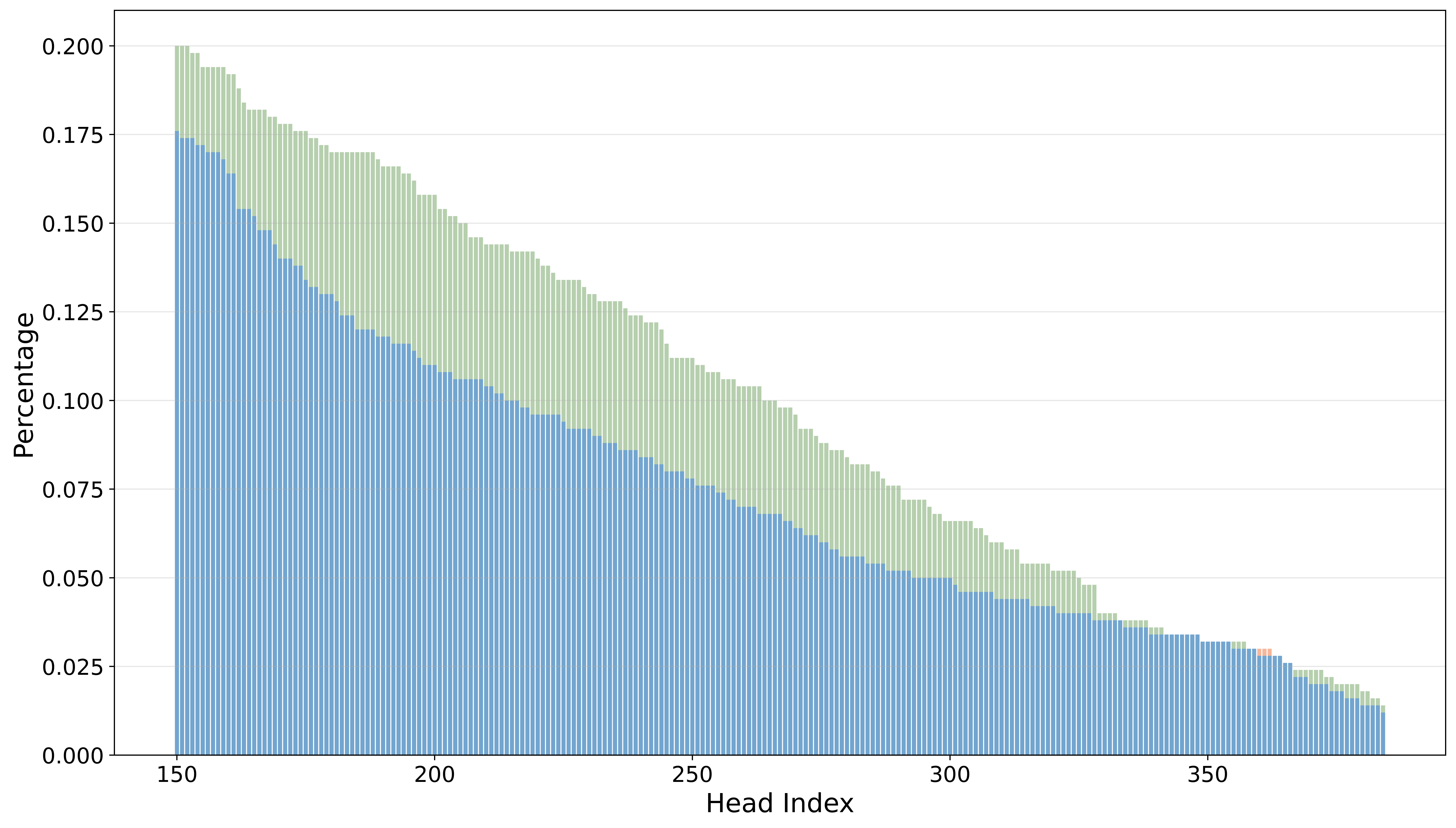}
    \caption{\textbf{Left: Updates $\rightarrow$ Weights}: Singular value distribution across layers of GaLore and GUM, where GUM demonstrates a more even and long-tailed distribution of singular values. \textbf{Right: Weights $\rightarrow$ Activations}: Tail distribution of modules that contain salient activations, where salient activations are defined as activations with top-k ($k=10,000$) attention scores over all modules. Randomly sampled 1K inputs from the C4 corpus are utilized as prompts. Blue parts correspond to GaLore's tail distribution, while green parts stand for GUM's further increase on top of GaLore.}
\label{fig:param_activation_distribution}
\end{figure}

\section{Conclusions}

In this paper, we investigate the debiasing technique of layerwise sampling for memory-efficient LLM training, whose combination with GaLore restores the theoretical convergence properties of full-parameter training. Our proposed algorithm, GUM, demonstrates that it is possible to achieve provable convergence in low-rank optimization without impairing its empirical performance and memory efficiency. Further analysis shows that the empirical gains are brought by the inherent high-rank updates, which lead to a higher overall stable rank and more uniformly distributed singular values in model parameters, yielding more long-tailed activation patterns and implying better memorization.

\section*{Ethics Statement}
After carefully reviewing the ethical regulations of the conference, to the best of our knowledge, this work does not present any foreseeable ethical concerns. No negative societal or ethical impacts are anticipated for the contribution of this work. The proposed algorithms are for general large language model training, and do not involve anything about human subjects, potentially harmful insights, potential conflicts of interest and sponsorship, discrimination/bias/fairness concerns, privacy and security issues, legal compliance, or research integrity issues.

\section*{Reproducibility Statement}
We have made efforts to ensure that our work is reproducible, with details provided in Section~\ref{sec:empirical} and Appendix~\ref{sec:training-setup}.

\bibliographystyle{unsrtnat}
\bibliography{ref}


\appendix

\newpage
\section{A General Unbiased Low-Rank Gradient Method Paradigm}\label{appendix:general_unbiased_algorithm_paradigm}
\begin{algorithm}[t]
   \caption{An unbiased version of Algorithm~\ref{alg:low_rank_project_paradigm} } 
   \label{alg:general_unbias_alg}
\begin{algorithmic}[1]
    \STATE {\bfseries Input:} $\{W_{0,\ell} \in \RR^{m_\ell\times n_\ell}\}$ with each $\ell$ corresponding to the $\ell$-th block of parameters, number of blocks $N_L$, sampling period $K$, projection rank $r$
    \FOR{$t=0$ {\bfseries to} $T/K-1$}

    \STATE \quad \texttt{delete\_optimizer\_states}() \COMMENT{Delete the optimizer states to clear memory}
    \STATE \quad Each block $\ell$ is sampled to do full-rank updates with probability $q_{t,\ell}$ 
    
    \STATE \quad \textbf{for} $k = 0$ \textbf{to} $K-1$ \textbf{do} 
    \STATE \quad \quad \textbf{for} $\ell = 1$ to $N_\ell$ \textbf{do}
    \STATE \quad \quad \quad
    $G_{t,k,\ell} = G(W_{tK+k-1,\ell}) $ \COMMENT{Obtain the gradient of the $\ell$-th layer at $W_t$}
    \STATE \quad \quad \quad $P_{t,k,\ell} \gets \texttt{get\_projector}()$ \COMMENT{Obtain the projector $P_{t,k,\ell} \in \RR^{m_\ell \times r}$}
    \STATE \quad \quad \quad { $\tG_{t,k,\ell} = \left\{ \begin{array}{ll}
        {\color{blue} \frac{1}{q_{t,\ell}} (I_{m_\ell} - P_{t,k,\ell} P_{t,k,\ell}^\top) G_{t,k,\ell} } , & \text{if block $\ell$ is sampled to be full-rank} \\
        \frac{1}{1-q_{t,\ell
        }}P_{t,k,\ell}^\top G_{t,k,\ell} , & \text{else}
    \end{array} \right.$}
    \STATE \quad \quad \quad $S_{t,k,\ell} \gets$ \texttt{optimizer.update\_state}($\tG_{t,k,\ell}$)
    \COMMENT{Run the base algorithm with $\tG_{t,k,\ell}$}
    \STATE \quad \quad \quad
    $W_{tK+k+1,\ell} =\left\{ \begin{array}{ll}
       {\color{blue} W_{tK+k,\ell} - \eta S_{t,k,\ell}}, & \text{if block $\ell$ is sampled to be full-rank} \\
       W_{tK+k,\ell} - \eta P_{t,k,\ell} S_{t,k,\ell},  & \text{else}
    \end{array} \right. $
    \STATE \quad \quad \textbf{end for}
    \STATE \quad \textbf{end for}

    \ENDFOR
\end{algorithmic}
\end{algorithm}

Here, we present our unbiased algorithm paradigm in Algorithm~\ref{alg:general_unbias_alg}.
The key idea of the algorithm is to compensate for biased errors introduced by the low-rank projection $P_t P_t^\top G_t$. 
To implement this while retaining memory efficiency, we refer to the main idea of LISA~\citep{pan2024lisa}, which allows some of the blocks to be sampled uniformly with probability $q$ in each period. This compensated full-rank updates use $G_t - P_t P_t^\top G_t$, while other blocks still do the original low-rank update. By carefully balancing the scaling constants for the two different updates, the biased low-rank term can be canceled out in expectation, resulting in an unbiased estimation of gradients across iterations. This unbiased version of the algorithm is presented in Algorithm~\ref{alg:general_unbias_alg}.

In one training process, the algorithm contains separated periods just like the vanilla GaLore algorithm~\citep{zhao2024galore} and LISA~\citep{pan2024lisa}. During each period $t$, each block of parameters is sampled to do full-rank updates with probability $q_{t,\ell}$. In each iteration $k$ in the period, we first compute the projection matrix $P_{t,k,\ell}$. Note that a lot of strategies for selecting projection matrices and sampling importance can be applied here~\citep{guo2020diffpruning,sung2021fishmask,ansell2021composable,das2023unified,muhamed2024grass,ramesh2024blockllm,liu2024refining}.
Then, the blocks not sampled to do full-rank updates run basically the same low-rank update with Algorithm~\ref{alg:low_rank_project_paradigm}, while the full-rank blocks directly run the base optimizer with the compensated gradient $\tG_{t,k,\ell} = (I_m - P_{t,k,\ell} P_{t,k,\ell}^\top) G_{t,k,\ell}$.

We note that the proposed debiasing technique Algorithm~\ref{alg:general_unbias_alg} works generally when the following properties are satisfied: 
\begin{itemize}
    \item \textbf{Property I.} The columns of the low-rank projection matrix $P_t \in \RR^{m\times r}$ with $r\le m$ are orthonormal, i.e., $P_t^\top P_t = I_{r\times r}$.
    \item \textbf{Property II.} The projection and optimization updates are commutable, which means that  
    $$S_t = P_t \space \texttt{optimizer.update\_state}(\tG_t) = \texttt{optimizer.update\_state}(P_t \tG_t).$$ Optimizers satisfying this property typically treat the update parameters as matrices instead of vectors, and only conduct matrix operations in the update. Two standard examples include SGD and Muon~\citep{jordan2024muon}.
\end{itemize}
If the two properties are satisfied, we can show that Algorithm~\ref{alg:general_unbias_alg} is unbiased compared to the base optimizer, since it is equivalent to running the base optimizer with an unbiased estimation of the gradient at each iteration.

\begin{lemma}[Unbiased update of Algorithm~\ref{alg:general_unbias_alg}]\label{lem:unbiased_update_general}
    When Property I and II are satisfied, a single iteration of Algorithm \ref{alg:general_unbias_alg} for $W \in \RR^{m\times n}$ is equivalent to
    \begin{align*}
        S \gets& \texttt{optimizer.update\_state}(\hat{G}) \\
        W^+ =& W - \eta S
    \end{align*}
    with $\EE[\hat{G}] = G \in \RR^{m\times n}$, where $G$ denotes the gradient obtained at $W$. 
\end{lemma}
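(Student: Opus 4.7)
The plan is to construct a single random matrix $\hat{G}$ such that, conditional on the sampling outcome, both branches of Algorithm~\ref{alg:general_unbias_alg} collapse to one call of \texttt{optimizer.update\_state} applied to $\hat{G}$, and then verify $\EE[\hat{G}]=G$ by a one-line expectation computation. Concretely, I would set
\[
\hat{G} \;=\; \begin{cases} \; q^{-1} (I_m - P P^\top) G & \text{with probability } q, \\ \; (1-q)^{-1} P P^\top G & \text{with probability } 1-q, \end{cases}
\]
and immediately obtain $\EE[\hat G] = (I_m - P P^\top) G + P P^\top G = G$. Here $P$ abbreviates $P_{t,k,\ell}$ and $q$ abbreviates $q_{t,\ell}$, and Property~I ensures that $P P^\top$ is a well-defined orthogonal projector so that this decomposition is orthogonal.

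Next I would verify update equivalence branch by branch. In the full-rank branch, Algorithm~\ref{alg:general_unbias_alg} literally invokes \texttt{optimizer.update\_state} on $q^{-1}(I_m - P P^\top)G = \hat G$ and writes $W^+ = W - \eta S$, which already matches the claimed form. In the low-rank branch, the algorithm calls $S = \texttt{optimizer.update\_state}\bigl((1-q)^{-1} P^\top G\bigr)$ and outputs $W^+ = W - \eta P S$. By Property~II (commutativity with left-multiplication by $P$),
\[
P \cdot \texttt{optimizer.update\_state}\bigl((1-q)^{-1} P^\top G\bigr) \;=\; \texttt{optimizer.update\_state}\bigl((1-q)^{-1} P P^\top G\bigr) \;=\; \texttt{optimizer.update\_state}(\hat G),
\]
so the low-rank branch also reduces to $W^+ = W - \eta \, \texttt{optimizer.update\_state}(\hat G)$. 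Combining the two branches finishes the proof.

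The only subtle step, and the step I expect to be the main obstacle, is the second equality above: without Property~II, one cannot pull $P$ inside the optimizer call, and the low-rank branch would not reduce to a single optimizer call on a random full-rank matrix — in that case the update structure would genuinely differ between the two branches and the lemma would fail. Property~I plays a supporting role: it is implicitly used by the standard examples (SGD, Muon) for which Property~II holds, and it guarantees the clean orthogonal split $I_m = (I_m - P P^\top) + P P^\top$ that drives the unbiasedness calculation.
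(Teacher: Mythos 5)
Your proposal is correct and matches the paper's own argument essentially step for step: you construct the same mixed random estimator $\hat{G}$ (compensated full-rank part with probability $q$, rescaled projected part $\frac{1}{1-q}PP^\top G$ otherwise), invoke Property~II exactly as the paper does to pull $P$ inside the optimizer call in the low-rank branch, and conclude with the same one-line expectation computation $\EE[\hat{G}] = (I - PP^\top)G + PP^\top G = G$. No gaps; the only difference is presentational (you define $\hat{G}$ up front, whereas the paper first rewrites the low-rank update via commutativity and then reads off $\hat{G}$).
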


\section{Proofs of Section~\ref{sec:alg} and \ref{sec:convergence_analysis}}\label{appendix:proofs}
\subsection{Proof of Lemma~\ref{lem:unbiased_update_general} and \ref{lem:unbiased_update}}

\begin{proof}[Proof of Lemma~\ref{lem:unbiased_update_general}]
    A single step of Algorithm~\ref{alg:general_unbias_alg} writes:
    \begin{align*}
        \tG =& \left\{ \begin{array}{ll}
        \frac{1}{q} (I - P P^\top) G  , & \text{with probability $q$} \\
        \frac{1}{1-q}P^\top G , & \text{with probability $1-q$}
    \end{array} \right. \\
    S =& \texttt{optimizer.update\_state}(\tG) \\
    W^+ =& \left\{ \begin{array}{ll}
        W - \eta S, & \text{with probability $q$} \\
       W - \eta P S,  & \text{with probability $1-q$}
    \end{array} \right.
    \end{align*}
    where $G$ is the gradient at $W$ and $P$ is the projection matrix obtained at $W$.
    Based on the commutative property, we know that
    \begin{align*}
        W^+ = W - \eta P S = W - \eta \;\texttt{optimizer.update\_state}(P\tG) ,
    \end{align*}
    which means that the update step is equivalent to
    \begin{align*}
        \hat{G} =& \left\{ \begin{array}{ll}
        \frac{1}{q} (I - P P^\top) G  , & \text{with probability $q$} \\
        \frac{1}{1-q}PP^\top G , & \text{with probability $1-q$}
    \end{array} \right. \\
    S =& \texttt{optimizer.update\_state}(\hat{G}) \\
    W^+ =& W - \eta S
    \end{align*}
    Since we have $\hat{G}$ is an unbiased estimation of $G$:
    \begin{align*}
        \EE[ \hat{G} ] = q \cdot \frac{1}{q} (I - P P^\top) G + (1-q) \cdot \frac{1}{1-q}PP^\top G = G ,
    \end{align*}
    we finish the proof that Algorithm~\ref{alg:general_unbias_alg} is unbiased compared to the base optimizer.
\end{proof}

\begin{proof}[Proof of Lemma~\ref{lem:unbiased_update}]
    Based on Lemma~\ref{lem:unbiased_update_general}, we only need to prove that GUM satisfies the two properties to finish the proof of Lemma~\ref{lem:unbiased_update}.

    \textbf{Property I.} Denote the projection matrix at one specific iteration $P$. Since $P$ is obtained from the SVD, we have $P \in \RR^{m\times r}$ and $P^\top P = I_r$.

    \textbf{Property II.} The base algorithm of GUM is Muon~\citep{jordan2024muon}. To prove the commutative property, we only need to prove that the Newton-Schulz iteration is commutable with $P$. In each iteration of the Newton Schulz iteration \texttt{NewtonSchulz}(X), we compute
    \begin{align*}
        X^+ = aX + b X X^\top X + c X X^\top X X^\top X ,
    \end{align*}
    where $a,b,c \in \RR$ are absolute constants. Then consider \texttt{NewtonSchulz}(PX), we get
    \begin{align*}
        X^+ =& aPX + b PX (PX)^\top (PX) + c PX (PX)^\top (PX) (PX)^\top (PX) \\
        =& P (aX + b X X^\top X + c X X^\top X X^\top X ),
    \end{align*}
    where the second equality is because of Property I.
    Therefore, we obtain that
    \begin{align*}
        \texttt{NewtonSchulz}(PX) = P \cdot \texttt{NewtonSchulz}(X) ,
    \end{align*}
    which finishes the proof of Property II and thus the unbiased property of GUM.
\end{proof}


\subsection{Proof of Theorem~\ref{thm:convergence_unbiased_galore_muon}}
We first state the notations in the following proof writing. For simplicity, we assume that the total iteration number $T=K \tau$.
For $k=0,\dots,K-1$ in a specific period $t=0,\dots,\tau-1$, Algorithm~\ref{alg:unbiased_low_rank_galore} is mathematically equivalent to the following formulation:
\begin{align*}
    \tG_{t,k} =& \left\{ \begin{array}{ll}
        \frac{1}{1-q_t} P_t P_t^\top G_{t,k}, & \text{if $\xi_t=0$} \\
        \frac{1}{q_t} (I - P_tP_t^\top) G_{t,k}, & \text{else}
    \end{array} \right. \\
    \tM_{t,k} =& \beta \tM_{t,k-1} + (1-\beta) \tG_{t,k} \\
    W_{tK+k+1} =& W_{tK+k} -\eta \space \mathrm{\texttt{NewtonSchulz}}(\tM_{t,k})
\end{align*}
where $G_{t,k}$ is the stochastic gradient obtained at $W_{tK+k}$ and $\xi_t \sim \mathrm{Bernoulli}(q_t)$ is the indicator random variable such that $\xi_t=1$ means using full-rank update in period $t$. 
We assume that the full-rank probability $q_t \equiv q$ and step size $\eta_t \equiv \eta$ are constants. The equivalence of Algorithm~\ref{alg:unbiased_low_rank_galore} and this formulation is shown by Lemma~\ref{lem:unbiased_update}. At the beginning of each period, we initialize $P_t$ from $G_{t,0}$ and set $\tM_{t,-1} = 0$. Also, we denote $\nabla f_{t,k} \triangleq \nabla f(W_{tK+k})$ and $\msign{X} \triangleq UV^\top$ for $X = U\Sigma V^\top$ as the SVD of $X$. Under Assumption~\ref{asm:ignore_ns_error}, we have $\mathrm{\texttt{NewtonSchulz}}(X) = \msign{X}$.
Note that here in the theoretical proof, we consider the damping, i.e., the $1-\beta$ term in the update of $\tM_{t,k}$. Since we initialize $\tM_{t,k}=0$ in each period, this damping will not affect the algorithm because the Newton-Schulz iteration is irrelevant to the input scale.

To help simplify the convergence proof, we also denote the residual of the projector as $R_t \in \RR^{m\times (m-r)}$, i.e., we take $U_t = [P_t \; R_t] \in \RR^{m\times m}$, which satisfies that $P_t^\top R_t = 0$, $R_t^\top P_t = 0$. Note that since we consider only the case $m \le n$ here, we have $U_tU_t^\top=P_tP_t^\top + R_tR_t^\top=I$. We further define
\begin{align}\label{eq:def_Q}
    Q_t \triangleq \left\{ \begin{array}{ll}
        P_t , & \text{if $\xi_t=0$} \\
        R_t , & \text{else}
    \end{array} \right.
\end{align}
and the following auxiliary sequence
\begin{align}\label{eq:def_M}
    M_{t,k} = \beta M_{t,k-1} + (1-\beta) G_{t,k} 
\end{align}
with $M_{t,-1} = 0$,
which is the exponential moving average of the real gradient. With these definitions, we have
\begin{align}\label{eq:msign_projector_q}
    \msign{\tM_{t,k}} = \msign{Q_tQ_t^\top M_{t,k}} = Q_t\msign{Q_t^\top M_{t,k}} ,
\end{align}
where the equation is based on the fact that $Q_t^\top Q_t = I$.

We first make use of the smoothness assumption to obtain a one-step analysis.

\begin{lemma}[One-step descent]\label{lem:one_step_descent}
    Under Assumption~\ref{asm:smooth} and \ref{asm:ignore_ns_error} and setting $\eta_t \equiv \eta$, for $t=0,\dots,\tau-1$ and $k=0,\dots,K-1$,  it holds that
    \begin{align}\label{eq:lem_one_step_descent}
        f(W_{tK+k+1}) \le f(W_{tK+k}) - \eta \Norm{Q_t^\top \nabla f_{t,k}}_* + \frac{1}{2} \eta^2L_{\mathrm{op}} + 2\eta \Norm{M_{t,k} - \nabla f_{t,k}}_*,
    \end{align}
    where $Q_t$ is defined as \eqref{eq:def_Q}.
\end{lemma}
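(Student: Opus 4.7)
The plan is to apply the spectral-norm descent lemma implied by Assumption~\ref{asm:smooth} to the update $W_{tK+k+1} = W_{tK+k} - \eta \msign{\tM_{t,k}}$, then replace $\msign{\tM_{t,k}}$ by the compact form $Q_t\msign{Q_t^\top M_{t,k}}$ provided by \eqref{eq:msign_projector_q}. This substitution is the crucial move because it eliminates the scaled projected momentum $\tM_{t,k}$ in favor of the true exponential moving average $M_{t,k}$, so the error term $M_{t,k} - \nabla f_{t,k}$ can be introduced naturally in the bound.

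First I would write
\begin{align*}
    f(W_{tK+k+1}) \le f(W_{tK+k}) - \eta \dotprod{\nabla f_{t,k}}{Q_t \msign{Q_t^\top M_{t,k}}} + \frac{\eta^2 L_{\mathrm{op}}}{2} \Norm{Q_t \msign{Q_t^\top M_{t,k}}}_{\mathrm{op}}^2
\end{align*}
and observe that $Q_t$ (a submatrix of the orthogonal $[P_t\;R_t]$) has orthonormal columns, while $\msign{\cdot}$ returns a partial isometry under Assumption~\ref{asm:ignore_ns_error}. Consequently the operator-norm factor equals exactly one, which yields the $\tfrac{1}{2}\eta^2 L_{\mathrm{op}}$ term of \eqref{eq:lem_one_step_descent}.

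Second I would lower-bound the inner product. Using adjointness to move $Q_t$ into the first slot and splitting $Q_t^\top \nabla f_{t,k} = Q_t^\top M_{t,k} - Q_t^\top(M_{t,k} - \nabla f_{t,k})$, the duality identity $\dotprod{X}{\msign{X}} = \Norm{X}_*$ together with Hölder's inequality on trace/spectral norms gives
\begin{align*}
    \dotprod{Q_t^\top \nabla f_{t,k}}{\msign{Q_t^\top M_{t,k}}} \ge \Norm{Q_t^\top M_{t,k}}_* - \Norm{Q_t^\top (M_{t,k} - \nabla f_{t,k})}_* .
\end{align*}
A second triangle inequality $\Norm{Q_t^\top M_{t,k}}_* \ge \Norm{Q_t^\top \nabla f_{t,k}}_* - \Norm{Q_t^\top (M_{t,k} - \nabla f_{t,k})}_*$, combined with the contraction property $\Norm{Q_t^\top A}_* \le \Norm{A}_*$ (which follows from $Q_t Q_t^\top \preceq I$), collapses both residual terms into $2\Norm{M_{t,k} - \nabla f_{t,k}}_*$. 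Flipping signs to convert this lower bound into an upper bound on the one-step descent produces exactly \eqref{eq:lem_one_step_descent}.

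The main obstacle is modest bookkeeping rather than any deep step: one must verify that within a period $\tM_{t,k}$ is a positive scalar multiple of $Q_t Q_t^\top M_{t,k}$, so that the inverse scaling $\tfrac{1}{1-q}$ or $\tfrac{1}{q}$ is absorbed by $\msign{\cdot}$ (ensuring \eqref{eq:msign_projector_q} applies), and then keep signs consistent throughout the dualization so that the Hölder-type lower bound on the inner product yields the correct upper bound on $f(W_{tK+k+1}) - f(W_{tK+k})$.
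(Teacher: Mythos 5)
Your proposal is correct and follows essentially the same route as the paper's proof: the spectral-norm descent lemma, the identity $\msign{\tM_{t,k}} = Q_t\msign{Q_t^\top M_{t,k}}$ (justified by the momentum restart and scale-invariance of $\msign{\cdot}$), the duality $\dotprod{X}{\msign{X}} = \Norm{X}_*$, and two triangle-inequality/contraction steps yielding the factor $2\Norm{M_{t,k} - \nabla f_{t,k}}_*$. The only cosmetic difference is that you split both error terms after projecting by $Q_t^\top$, whereas the paper absorbs one error term before the substitution, which changes nothing in the bound.
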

\begin{proof}
    Based on Assumption~\ref{asm:smooth}, we have the descent property
    \begin{align*}
        f(W_{tK+k+1}) \le& f(W_{tK+k}) + \dotprod{\nabla f_{t,K}}{W_{tK+k+1} - W_{tK+k}} + \frac{L_{\mathrm{op}}}{2} \Norm{W_{tK+k+1} - W_{tK+k}}_{\mathrm{op}}^2 \\
        =& f(W_{tK+k}) - \eta \dotprod{\nabla f_{t,K}}{\msign{\tM_{t,k}}} + \frac{L_{\mathrm{op}} \eta^2}{2} \Norm{\msign{\tM_{t,k}}}_{\mathrm{op}}^2 \\
        =& f(W_{tK+k}) - \eta \dotprod{M_{t,K}}{\msign{\tM_{t,k}}} + \frac{L_{\mathrm{op}} \eta^2}{2}  \\
        +& \eta \dotprod{M_{t,K} - \nabla f_{t,k}}{\msign{\tM_{t,k}}} \\
        \le& f(W_{tK+k}) - \eta \dotprod{M_{t,K}}{\msign{\tM_{t,k}}} + \frac{L_{\mathrm{op}} \eta^2}{2} + \eta \Norm{M_{t,K} - \nabla f_{t,k}}_* ,
    \end{align*}
    where the last inequality is based on the fact that $\Norm{\cdot}_*$ and $\Norm{\cdot}_{\mathrm{op}}$ are dual norms and $\Norm{\msign{\tM_{t,k}}}_{\mathrm{op}} = 1$. Then we further deal with the second term on the right hand side:
    \begin{align*}
        -\dotprod{M_{t,K}}{\msign{\tM_{t,k}}} \overset{\eqref{eq:msign_projector_q}}{=}& -\dotprod{M_{t,K}}{Q_t \msign{Q_t^\top M_{t,k}}} \\
        =& -\dotprod{Q_t^\top M_{t,K}}{\msign{Q_t^\top M_{t,k}}} = -\Norm{Q_t^\top M_{t,k}}_* \\
        \le& -\Norm{Q_t^\top \nabla f_{t,k}}_* + \Norm{Q_t^\top (M_{t,k} - \nabla f_{t,k})}_* \\
        \le& -\Norm{Q_t^\top \nabla f_{t,k}}_* + \Norm{M_{t,k} - \nabla f_{t,k}}_* ,
    \end{align*}
    where the last inequality is based on that $Q_tQ_t^\top \preceq I$. Then combining the inequalities, we can finish the proof.
\end{proof}

Based on Lemma~\ref{lem:one_step_descent}, we could find that a key to proving the convergence is the $\Norm{M_{t,k} - \nabla f_{t,k}}_*$ term. Let us define the following auxiliary sequences:
\begin{align}\label{eq:def_epsilon}
    \epsilon_{t,k} \triangleq M_{t,k} - \nabla f_{t,k}, \quad
    S_{t,k} \triangleq \nabla f_{t,k-1} - \nabla f_{t,k}, \quad N_{t,k} \triangleq G_{t,k} - \nabla f_{t,k}
\end{align}
and additionally set $\nabla f_{t,-1} \triangleq \nabla f_{t,0}$ for all $t = 0,\dots, \tau-1$.
Then we consider decomposing the desired $\epsilon_t$ based on the properties of moving average sequences.
\begin{lemma}[Decompose $\epsilon_{t,k}$]\label{lem:decompose_eps}
    For $t=0,\dots,\tau-1$ and $k=0,\dots,K-1$, it holds that
    \begin{align}\label{eq:lem_decompose_eps}
        \epsilon_{t,k}  = \sum_{i=1}^k \beta^{k-i+1} S_{t,i} + ( 1 - \beta) \sum_{i=0}^{k} \beta^{k-i} N_{t,i} - \beta^k \nabla f_{t,0}.
    \end{align}
\end{lemma}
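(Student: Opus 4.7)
The plan is to derive a one-step linear recursion for the error sequence $\epsilon_{t,k}$ and then unroll it down to the initial index. The identity is purely algebraic, so no smoothness, probabilistic, or norm-based argument is needed; everything follows from rearranging the exponential-moving-average recursion for $M_{t,k}$.

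First, I would substitute the update rule $M_{t,k} = \beta M_{t,k-1} + (1-\beta) G_{t,k}$ into the definition of $\epsilon_{t,k}$ and use $G_{t,k} = \nabla f_{t,k} + N_{t,k}$ to pull out the noise contribution. Rewriting $\beta M_{t,k-1} = \beta \epsilon_{t,k-1} + \beta \nabla f_{t,k-1}$ and collecting the remaining gradient terms as $\beta \nabla f_{t,k-1} - \beta \nabla f_{t,k} = \beta S_{t,k}$ (consistent with the convention $\nabla f_{t,-1} = \nabla f_{t,0}$, which forces $S_{t,0}=0$ and so makes the $k=0$ step consistent) produces the one-step recursion
\[
\epsilon_{t,k} = \beta \epsilon_{t,k-1} + \beta S_{t,k} + (1-\beta) N_{t,k}.
\]

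Second, I would iterate this recursion down to the initial index $k=-1$, where the boundary conventions $M_{t,-1}=0$ and $\nabla f_{t,-1}=\nabla f_{t,0}$ give $\epsilon_{t,-1} = -\nabla f_{t,0}$. Standard unrolling produces geometric weights on the drift and noise contributions plus a closing term carrying the initial error. Using $S_{t,0}=0$ to trim the index $i=0$ from the drift sum, I would then relabel indices so that the $S$-sum runs from $i=1$ to $k$ with exponent $\beta^{k-i+1}$ and the $N$-sum runs from $i=0$ to $k$ with exponent $\beta^{k-i}$, matching the stated identity. An equivalent route is induction on $k$: the base case is absorbed into the $k=-1$ convention, and the inductive step is exactly the recursion above.

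The only real care needed is with the index book-keeping around $k=-1$. The shift between $\beta \epsilon_{t,k-1}$ on the right-hand side of the recursion and the various $\beta$-powers that appear in the unrolled identity is where off-by-one errors would enter, and tracking the exponent on the $-\nabla f_{t,0}$ closing term in particular requires one to be precise about whether one is unrolling $k+1$ steps from $\epsilon_{t,-1}$ or $k$ steps from $\epsilon_{t,0}$. Beyond this clerical point, the lemma is a short computation once the recursion and initial condition are in place.
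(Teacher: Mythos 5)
Your proposal is correct and takes essentially the same route as the paper: establish the one-step recursion $\epsilon_{t,k} = \beta\,\epsilon_{t,k-1} + \beta S_{t,k} + (1-\beta) N_{t,k}$ from the moving-average update and the splitting $G_{t,k} = \nabla f_{t,k} + N_{t,k}$, then unroll it (the paper stops the unrolling at $\epsilon_{t,0} = (1-\beta)N_{t,0} - \beta \nabla f_{t,0}$ rather than at your $\epsilon_{t,-1} = -\nabla f_{t,0}$, which differs only by one extra application of the recursion). One note on the exponent you rightly flag as the delicate point: a careful unroll from either base case yields the closing term $-\beta^{k+1}\nabla f_{t,0}$ rather than the $-\beta^{k}\nabla f_{t,0}$ printed in the lemma, so this off-by-one sits in the paper's statement (its own proof has the same feature) rather than in your argument, and it is harmless downstream because Lemma~\ref{lem:bound_epsilon_norm} only needs the upper bound $\beta^{k+1}\Norm{\nabla f_{t,0}}_* \le \beta^{k}\Norm{\nabla f_{t,0}}_*$.
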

\begin{proof}
    From the definition of $M_{t,k}$ in \eqref{eq:def_M}, we know that
    \begin{align*}
        M_{t,k} = \beta M_{t,k-1} + (1-\beta) G_{t,k},
    \end{align*}
    which implies that
    \begin{align*}
        \epsilon_{t} =& \beta ( M_{t,k-1} - \nabla f_{t,k-1} ) + \beta ( \nabla f_{t,k-1} - \nabla f_{t,k}) + ( 1-\beta) (G_{t,k} - \nabla f_{t,k}) \\
        =&
        \beta \epsilon_{t,k-1} + \beta S_{t,k} + (1-\beta ) N_{t,k}.
    \end{align*}
    Then by applying the equality recursively and noting that $\epsilon_{t,0}=(1-\beta) G_{t,0} - \nabla f_{t,0} = (1-\beta)N_{t,0} - \beta\nabla f_{t,0}$, we conclude the proof.
\end{proof}
Then we produce the next lemma to state the variance contraction properties of momentum for Muon, which has been explored for Normalized SGD~\citep{cutkosky2020momentum} and SignSGD~\citep{sun2023momentum}, and also for Muon~\citep{li2025note}, but with different assumptions.
\begin{lemma}[Variance Contraction]\label{lem:variance_contract_momentum}
    Under Assumption~\ref{asm:gradient_noise}, for $t=0,\dots,\tau-1$ and $k=0,\dots,K-1$, it holds that
    \begin{align}\label{eq:lem_variance_contract_momentum}
        \EE\left[ \Norm{( 1 - \beta) \sum_{i=0}^{k} \beta^{k-i} N_{t,i}}_* \right] \le \Norm{V}_* \sqrt{(1 - \beta^{2k}) (1-\beta) } .
    \end{align}
\end{lemma}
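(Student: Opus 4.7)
The plan is to control the expected trace norm via the identity $\Norm{X}_* = \tr{(XX^\top)^{1/2}}$ combined with a matrix analogue of Jensen's inequality. Writing $X_k \triangleq (1-\beta)\sum_{i=0}^k \beta^{k-i} N_{t,i}$ for the quantity on the left-hand side, the first step is to invoke operator concavity of the map $A \mapsto A^{1/2}$ on the PSD cone, which yields
\begin{align*}
    \EE\bigl[(X_k X_k^\top)^{1/2}\bigr] \preceq \bigl(\EE[X_k X_k^\top]\bigr)^{1/2}.
\end{align*}
Taking traces on both sides (trace preserves the L\"owner order) gives $\EE[\Norm{X_k}_*] \le \tr{(\EE[X_k X_k^\top])^{1/2}}$, so the problem reduces to controlling the PSD matrix $\EE[X_k X_k^\top]$.

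Next, I would compute $\EE[X_k X_k^\top]$ by expanding the double sum. The cross terms $\EE[N_{t,i} N_{t,j}^\top]$ with $i \ne j$ vanish by a standard martingale argument: conditioning on the natural filtration up through iteration $\max(i,j)-1$ and using the unbiasedness part of Assumption~\ref{asm:gradient_noise} (i.e.\ $\EE[N(W)]=0$) zeroes out the conditional expectation, and hence the full expectation. The surviving diagonal terms are bounded by the covariance condition $\EE[N_{t,i} N_{t,i}^\top] \preceq VV^\top$. Summing the geometric series in $\beta^{2(k-i)}$ yields a PSD bound of the form
\begin{align*}
    \EE[X_k X_k^\top] \;\preceq\; (1-\beta)^2 \sum_{i=0}^{k} \beta^{2(k-i)}\, VV^\top \;=\; c_k\, VV^\top,
\end{align*}
where $c_k$ is of order $(1-\beta^{2k})(1-\beta)$.

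The final step is to apply operator monotonicity of the matrix square root (L\"owner--Heinz) to convert the PSD bound into $\bigl(\EE[X_k X_k^\top]\bigr)^{1/2} \preceq \sqrt{c_k}\,(VV^\top)^{1/2}$, and then take the trace to obtain $\EE[\Norm{X_k}_*] \le \sqrt{c_k}\,\Norm{V}_*$, which is the advertised inequality. The main obstacle is the opening operator-Jensen step: a naive scalar Jensen applied to $\Norm{X_k}_*^2$ together with $\Norm{\cdot}_* \le \sqrt{\mathrm{rank}}\,\Norm{\cdot}_{\mathrm{F}}$ would only deliver a bound in terms of $\Norm{V}_{\mathrm{F}}$ with an unwanted dimension-dependent prefactor, so landing at the dimension-free $\Norm{V}_*$ genuinely requires matrix-concavity of $A \mapsto A^{1/2}$. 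The remaining pieces (martingale decorrelation of the cross terms and the geometric summation) are routine once the matrix Jensen step is in place.
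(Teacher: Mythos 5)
Your argument is correct and delivers the same bound, but it replaces the paper's key lemma with a different piece of matrix analysis, so it is worth contrasting the two. The paper's proof goes through the weighted trace-norm inequality of Lemma 8 in \citet{an2025asgo}, $\Norm{X}_* \le \sqrt{\Norm{H}_* \tr{X^\top H^{-1} X}}$ for an arbitrary positive definite $H$, then applies only scalar Jensen ($\EE[\sqrt{X}]\le\sqrt{\EE[X]}$), kills the cross terms, and finally instantiates $H=(VV^\top)^{1/2}$ so that $\tr{VV^\top (VV^\top)^{-1/2}}=\Norm{V}_*$. You work directly at the operator level: matrix Jensen via operator concavity of $A\mapsto A^{1/2}$ gives $\EE[(X_kX_k^\top)^{1/2}]\preceq(\EE[X_kX_k^\top])^{1/2}$, and L\"owner--Heinz monotonicity converts the second-moment bound $\EE[X_kX_k^\top]\preceq c_k\,VV^\top$ into the trace-norm statement. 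The middle portion (vanishing cross terms, $\EE[N_{t,i}N_{t,i}^\top]\preceq VV^\top$, geometric summation) is identical in substance; in fact your filtration/martingale argument for the cross terms is more careful than the paper's appeal to ``independence'' of $N_{t,i}$ and $N_{t,j}$, which is not literally part of Assumption~\ref{asm:gradient_noise}. What each route buys: the paper needs only elementary scalar Jensen plus the imported lemma, at the cost of introducing and inverting the auxiliary matrix $H=(VV^\top)^{1/2}$ (implicitly assuming $VV^\top$ nonsingular, or requiring a limiting argument); your route invokes heavier but standard facts (operator Jensen, L\"owner--Heinz), is parameter-free, and applies verbatim when $VV^\top$ is singular. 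One shared imprecision you inherit with your ``of order'' phrasing: the geometric sum is $\sum_{i=0}^{k}\beta^{2(k-i)}=(1-\beta^{2(k+1)})/(1-\beta^2)$, so the sharp constant is $(1-\beta^{2(k+1)})(1-\beta)/(1+\beta)$ rather than $(1-\beta^{2k})(1-\beta)$ (the stated bound is vacuous at $k=0$); this matches the paper's own bookkeeping and does not affect the final rate, but it would be worth writing the exact constant in a polished version.
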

\begin{proof}
    Based on Lemma 8 in \citet{an2025asgo}, for an arbitrary symmetric positive definite matrix $H \in \RR^{m\times m}$, it holds that
    \begin{align*}
        \EE\left[ \Norm{ \sum_{i=0}^{k} \beta^{k-i} N_{t,i}}_* \right] \le& \EE\left[ \sqrt{ \Norm{H}_* \tr{\left( \sum_{i=0}^{k} \beta^{k-i} N_{t,i} \right)^\top H^{-1} \left( \sum_{i=0}^{k} \beta^{k-i} N_{t,i} \right)} } \right] \\
        =& \EE\left[ \sqrt{ \Norm{H}_* \tr{ \left( \sum_{i=0}^{k} \beta^{k-i} N_{t,i} \right) \left( \sum_{i=0}^{k} \beta^{k-i} N_{t,i} \right)^\top H^{-1} } } \right] \\
        \le& \sqrt{ \Norm{H}_* \EE\left[ \tr{ \left( \sum_{i=0}^{k} \beta^{k-i} N_{t,i} \right) \left( \sum_{i=0}^{k} \beta^{k-i} N_{t,i} \right)^\top H^{-1} } \right] } \\
        =& \sqrt{ \Norm{H}_* \EE\left[ \tr{ \left( \sum_{i=0}^{k} \beta^{2(k-i)} N_{t,i} N_{t,i}^\top \right)  H^{-1} } \right] } ,
    \end{align*}
    where the last inequality is based on the fact that $\EE[\sqrt{X}] \le \sqrt{\EE[X]}$ and the last equality is based on the assumption that $N_{t,i}$ and $N_{t,j}$ are independent for $i \neq j$, which implies $\EE[\tr{N_{t,i}N_{t,j}^\top H}] = 0$. Then taking $H = (VV^\top)^{1/2}$ leads to
    \begin{align*}
        \sqrt{ \Norm{H}_* \EE\left[ \tr{ \left( \sum_{i=0}^{k} \beta^{2(k-i)} N_{t,i} N_{t,i}^\top \right)  H } \right] } =& \sqrt{ \Norm{V}_* \EE\left[ \tr{ \sum_{i=0}^{k} \beta^{2(k-i)} N_{t,i} N_{t,i}^\top (VV^\top)^{-\frac{1}{2}} } \right] } \\
        \le& \sqrt{ \Norm{V}_* \sum_{i=0}^{k} \beta^{2(k-i)} \tr{  VV^\top (VV^\top)^{-\frac{1}{2}} } } \\
        \le& \Norm{V}_* \sqrt{\frac{1 - \beta^{2k} }{1 - \beta^2}},
    \end{align*}
    where the first inequality is based on Assumption~\ref{asm:gradient_noise} and the second inequality is by algebra.
    Then, combining the inequalities and multiplying $1-\beta$ gives the result.
\end{proof}

\begin{lemma}[Bound $\EE\Norm{\epsilon_{t,k}}_*$]\label{lem:bound_epsilon_norm}
    Under Assumption~\ref{asm:gradient_noise}, for $t=0,\dots,\tau-1$ and $k=0,\dots,K-1$, it holds that
    \begin{align}\label{eq:lem_bound_epsilon_norm}
        \EE[\Norm{\epsilon_{t,k}}_*] \le \frac{1 - \beta^{k}}{1-\beta} L_{\mathrm{op}} \eta +  \sqrt{(1 - \beta^{2k}) (1-\beta) } \Norm{V}_* + \beta^k \EE\left[ \Norm{\nabla f_{t,0}}_* \right].
    \end{align}
\end{lemma}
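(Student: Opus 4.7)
The plan is to start from the decomposition in Lemma~\ref{lem:decompose_eps}, apply the triangle inequality of the trace norm, and bound the three resulting sums separately. Writing
\begin{align*}
\EE[\Norm{\epsilon_{t,k}}_*] \le \sum_{i=1}^k \beta^{k-i+1} \EE[\Norm{S_{t,i}}_*] + \EE\left[ \Norm{(1-\beta)\sum_{i=0}^{k} \beta^{k-i} N_{t,i}}_* \right] + \beta^k \EE[\Norm{\nabla f_{t,0}}_*],
\end{align*}
the third term matches the statement directly and the second term is exactly what Lemma~\ref{lem:variance_contract_momentum} provides. So the remaining work concentrates on the drift sum involving $S_{t,i}$.

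For the drift term, I would use Assumption~\ref{asm:smooth} together with the fact that Algorithm~\ref{alg:unbiased_low_rank_galore} takes steps of the form $W_{tK+i} - W_{tK+i-1} = -\eta \,\msign{\tM_{t,i-1}}$, whose operator norm is at most $1$ under Assumption~\ref{asm:ignore_ns_error} (since $\msign{X} = UV^\top$ has operator norm $1$). Hence $\Norm{S_{t,i}}_* = \Norm{\nabla f_{t,i-1} - \nabla f_{t,i}}_* \le L_{\mathrm{op}} \Norm{W_{tK+i-1} - W_{tK+i}}_{\mathrm{op}} \le L_{\mathrm{op}} \eta$. Summing the geometric series, $\sum_{i=1}^k \beta^{k-i+1} = \beta + \beta^2 + \cdots + \beta^k = \beta \cdot \frac{1-\beta^k}{1-\beta} \le \frac{1-\beta^k}{1-\beta}$, which yields the first term of the bound.

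Putting the three pieces together produces exactly \eqref{eq:lem_bound_epsilon_norm}. The only subtlety — and the step I expect to require the most care — is the pairing of norms in the smoothness bound: Assumption~\ref{asm:smooth} couples the trace norm of the gradient difference to the operator norm of the iterate difference, which is precisely the reason the Muon-style update (normalized in operator norm) enters cleanly. Everything else reduces to the triangle inequality, summing a geometric series, and invoking Lemma~\ref{lem:variance_contract_momentum} as a black box.
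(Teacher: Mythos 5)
Your proposal is correct and follows essentially the same route as the paper's own proof: decompose via Lemma~\ref{lem:decompose_eps}, apply the triangle inequality, bound the drift term by $L_{\mathrm{op}}\eta$ using Assumption~\ref{asm:smooth} together with the unit operator norm of the $\msign{\cdot}$ update, sum the geometric series, and invoke Lemma~\ref{lem:variance_contract_momentum} for the noise term. Your explicit remark on the trace-norm/operator-norm pairing is exactly the point the paper uses implicitly, so there is nothing to add.
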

\begin{proof}
    Based on Lemma~\ref{lem:decompose_eps}, it holds that
    \begin{align*}
        \EE\left[ \Norm{\epsilon_{t,k}}_* \right]  =& \EE\left[ \Norm{ \sum_{i=1}^k \beta^{k-i+1} S_{t,i} + ( 1 - \beta) \sum_{i=0}^{k} \beta^{k-i} N_{t,i} - \beta^k \nabla f_{t,0} }_* \right] \\
        \le& \sum_{i=1}^k \beta^{k-i+1}\Norm{  S_{t,i}}_* + \Norm{ ( 1 - \beta) \sum_{i=0}^{k} \beta^{k-i} N_{t,i} }_* + \Norm{ \beta^k \nabla f_{t,0} }_*,
    \end{align*}
    where the inequality is based on the triangular inequality. For the first term in the RHS, it holds that
    \begin{align*}
        \Norm{S_{t,i}}_* = \Norm{\nabla f_{t,i-1} - \nabla f_{t,i}}_* \le L_{\mathrm{op}} \Norm{W_{tK+i-1} - W_{tK+i}} = L_{\mathrm{op}} \eta .
    \end{align*}
    Thus we have
    \begin{align*}
        \EE\left[ \Norm{\epsilon_{t,k}}_* \right]  
        \le& \sum_{i=1}^k \beta^{k-i+1} L_{\mathrm{op}} \eta + \EE\left[ \Norm{ ( 1 - \beta) \sum_{i=0}^{k} \beta^{k-i} N_{t,i} }_* \right] + \EE\left[ \Norm{ \beta^k \nabla f_{t,0} }_* \right] \\
        \overset{\eqref{eq:lem_variance_contract_momentum}}{\le}&
        \sum_{i=1}^k \beta^{k-i+1} L_{\mathrm{op}} \eta + \sqrt{(1 - \beta^{2k}) (1-\beta) } \Norm{V}_* + \beta^k \EE\left[ \Norm{\nabla f_{t,0}}_* \right] \\
        \le& \frac{1 - \beta^{k}}{1-\beta} L_{\mathrm{op}} \eta + \sqrt{(1 - \beta^{2k}) (1-\beta) } \Norm{V}_* + \beta^k \EE\left[ \Norm{\nabla f_{t,0}}_* \right] ,
    \end{align*}
    which concludes the proof.
\end{proof}

We need to further determine the expected projected gradient for $\nabla f_{t,0}$.
\begin{lemma}[Expected projected gradient]\label{lem:expected_projected_gradient_lhs}
    For $t=0,\dots,\tau-1$, it holds that
    \begin{align}\label{eq:lem_expected_projected_gradient_lhs}
        \EE\left[ \Norm{Q_t^\top \nabla f_{t,0}}_* \right] \ge \min\left\{ q,1-q \right\} \EE\left[ \Norm{\nabla f_{t,0}}_* \right].
    \end{align}
\end{lemma}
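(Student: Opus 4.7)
The plan is to evaluate $\EE\left[\|Q_t^\top \nabla f_{t,0}\|_*\right]$ by conditioning on all randomness except the Bernoulli sampling indicator $\xi_t$, and then to combine the orthogonality of $[P_t\ R_t]$ with a simple convex-combination inequality. Since $\xi_t \sim \mathrm{Bernoulli}(q)$ is independent of $(P_t, R_t, \nabla f_{t,0})$, the definition \eqref{eq:def_Q} together with the tower property gives
\[
\EE\left[\|Q_t^\top \nabla f_{t,0}\|_*\right] = (1-q)\,\EE\left[\|P_t^\top \nabla f_{t,0}\|_*\right] + q\,\EE\left[\|R_t^\top \nabla f_{t,0}\|_*\right].
\]

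Next I would exploit the fact that both $P_t$ and $R_t$ have orthonormal columns. For any matrix $A$, if $P_t^\top A = U\Sigma V^\top$ is an SVD, then $P_t P_t^\top A = (P_t U)\Sigma V^\top$ is also an SVD because $P_t U$ still has orthonormal columns; hence $\|P_t P_t^\top A\|_* = \|P_t^\top A\|_*$, and similarly $\|R_t R_t^\top A\|_* = \|R_t^\top A\|_*$. Because $P_t P_t^\top + R_t R_t^\top = I_m$ (the rows/columns corresponding to $m \le n$), the triangle inequality for the trace norm yields
\[
\|\nabla f_{t,0}\|_* \le \|P_t P_t^\top \nabla f_{t,0}\|_* + \|R_t R_t^\top \nabla f_{t,0}\|_* = \|P_t^\top \nabla f_{t,0}\|_* + \|R_t^\top \nabla f_{t,0}\|_*.
\]

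To finish, I would apply the elementary inequality $(1-q)x + q y \ge \min\{q, 1-q\}(x+y)$ valid for all $x, y \ge 0$ (if $q \le 1/2$, subtract $q(x+y)$ from the left-hand side to get $(1-2q)x \ge 0$; the case $q \ge 1/2$ is symmetric). Taking $x = \|P_t^\top \nabla f_{t,0}\|_*$ and $y = \|R_t^\top \nabla f_{t,0}\|_*$ inside the expectation and chaining with the triangle-inequality bound immediately gives \eqref{eq:lem_expected_projected_gradient_lhs}.

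The only step that requires any care is the singular-value invariance $\|P_t P_t^\top A\|_* = \|P_t^\top A\|_*$ under left multiplication by a matrix with orthonormal columns, which lets us convert the residual-style decomposition $\nabla f_{t,0} = P_tP_t^\top \nabla f_{t,0} + R_tR_t^\top \nabla f_{t,0}$ into a statement about $\|P_t^\top \nabla f_{t,0}\|_*$ and $\|R_t^\top \nabla f_{t,0}\|_*$; everything else is a one-line expansion or an elementary scalar inequality, so there is no substantive obstacle.
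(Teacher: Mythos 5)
Your proposal is correct, and its overall skeleton matches the paper's proof: condition on the Bernoulli indicator $\xi_t$ (independent of $W_{tK}$, hence of $P_t,R_t,\nabla f_{t,0}$) to get the convex combination $(1-q)\,\EE\Norm{P_t^\top \nabla f_{t,0}}_* + q\,\EE\Norm{R_t^\top \nabla f_{t,0}}_*$, establish $\Norm{P_t^\top X}_* + \Norm{R_t^\top X}_* \ge \Norm{X}_*$, and finish with the scalar bound $(1-q)x+qy \ge \min\{q,1-q\}(x+y)$. The one place you diverge is the middle inequality: the paper writes $\Norm{P_t^\top X}_* = \tr{\left(X^\top P_tP_t^\top X\right)^{1/2}}$ and invokes the superadditivity-type fact $\tr{A^{1/2}}+\tr{B^{1/2}} \ge \tr{\left(A+B\right)^{1/2}}$ for positive semidefinite $A,B$, whereas you decompose $\nabla f_{t,0} = P_tP_t^\top \nabla f_{t,0} + R_tR_t^\top \nabla f_{t,0}$ (using $P_tP_t^\top + R_tR_t^\top = I_m$), apply the triangle inequality for the trace norm, and then remove the leading $P_t$ (resp.\ $R_t$) via the observation that left multiplication by a matrix with orthonormal columns preserves singular values. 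Your route is somewhat more elementary, since it needs only the norm axioms and orthogonal invariance rather than a matrix-concavity/trace inequality, while the paper's version states the needed fact in a single trace identity without introducing the intermediate equality $\Norm{P_tP_t^\top X}_* = \Norm{P_t^\top X}_*$; both arguments are valid and yield the identical bound.
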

\begin{proof}
    Based on the algorithm, we know that $\xi_t$ and $W_{tK}$ are independent, which means that
    \begin{align*}
        \EE\left[ \Norm{Q_t^\top \nabla f_{t,0}}_* \right] = (1-q) \EE\left[ \Norm{P_t^\top \nabla f_{t,0}}_* \right] + q\EE\left[ \Norm{R_t^\top \nabla f_{t,0}}_* \right].
    \end{align*}
    Because we have $U_t = [P_t\; R_t]$ that satisfies $U_t^\top U_t=U_tU_t^\top = I$, it holds for any $X \in \RR^{m\times n}$ that
    \begin{align*}
        \Norm{P_t^\top X}_* + \Norm{R_t^\top X}_* =& \tr{\left( X^\top P_t P_t^\top X \right)^{\frac{1}{2}}} + \tr{\left( X^\top R_t R_t^\top X \right)^{\frac{1}{2}}} \\
        \ge& \tr{ \left( X^\top (P_t P_t^\top + R_t R_t^\top) X \right)^{\frac{1}{2}} }
        \\
        =& \tr{ \left( X^\top X \right)^{\frac{1}{2}} } = \Norm{X}_*.
    \end{align*}
    Therefore, we have
    \begin{align*}
        \EE\left[ \Norm{Q_t^\top \nabla f_{t,0}}_* \right] =& (1-q) \EE\left[ \Norm{P_t^\top \nabla f_{t,0}}_* \right] + q\EE\left[ \Norm{R_t^\top \nabla f_{t,0}}_* \right] \\
        \ge&
        \min\left\{ q,1-q \right\} \left( \EE\left[ \Norm{P_t^\top \nabla f_{t,0}}_* \right] + \EE\left[ \Norm{R_t^\top \nabla f_{t,0}}_* \right] \right) \\
        \ge& \min\left\{ q,1-q \right\} \EE\left[ \Norm{\nabla f_{t,0}}_* \right],
    \end{align*}
    which completes the proof.
\end{proof}

With the lemmas in hand, we are able to prove Theorem~\ref{thm:convergence_unbiased_galore_muon}.

\begin{proof}[Proof of Theorem~\ref{thm:convergence_unbiased_galore_muon}]
    Based on Lemma~\ref{lem:one_step_descent}, for $t=0,\dots,\tau-1$ and $k=0,\dots,K-1$,  it holds that
    \begin{align*}
        f(W_{tK+k+1}) \overset{\eqref{eq:lem_one_step_descent}}{\le}& f(W_{tK+k}) - \eta \Norm{Q_t^\top \nabla f_{t,k}}_* + \frac{1}{2} \eta^2L_{\mathrm{op}} + 2\eta \Norm{M_{t,k} - \nabla f_{t,k}}_* \\
        =&
        f(W_{tK+k}) - \eta \Norm{Q_t^\top \nabla f_{t,k}}_* + \frac{1}{2} \eta^2L_{\mathrm{op}} + 2\eta \Norm{\epsilon_{t,k}}_*,
    \end{align*}
    where $Q_t$ is defined in \eqref{eq:def_Q} and $\epsilon_{t,k}$ is defined in \eqref{eq:def_epsilon}. Then, after rearrangement and summation over $k$ and taking expectation, we have
    \begin{align*}
        \sum_{k=0}^{K-1} \eta \EE\left[ \Norm{Q_t^\top \nabla f_{t,k}}_* \right] \le& \EE\left[ f(W_{tK}) - f(W_{(t+1)K}) \right] + \frac{1}{2} \eta^2 K L_{\mathrm{op}} + 2\eta \sum_{k=0}^{K-1} \EE\left[ \Norm{\epsilon_{t,k}}_* \right] \\
        \overset{\eqref{eq:lem_bound_epsilon_norm}}{\le}&
        \EE\left[ f(W_{tK}) - f(W_{(t+1)K}) \right] + \frac{1}{2} \eta^2 K L_{\mathrm{op}} \\
        &+ 2\eta \sum_{k=0}^{K-1} \left( \frac{1 - \beta^{k}}{1-\beta} L_{\mathrm{op}} \eta + \left( \sqrt{(1 - \beta^{2k}) (1-\beta) } + \beta^k \right) \Norm{V}_* \right) \\
        \le& 
        \EE\left[ f(W_{tK}) - f(W_{(t+1)K}) \right] +  \eta^2 K L_{\mathrm{op}} \left( \frac{1}{2} + \frac{2(1 - \beta^{K})}{1-\beta} \right) \\
        &+ 2\eta \left(  \sqrt{(1 - \beta^{2k}) (1-\beta) } \Norm{V}_* + \beta^k \EE\left[ \Norm{\nabla f_{t,0}}_* \right] \right) .
    \end{align*}
    Since $W_{tK+k}$ is dependent on $Q_t$, it would be difficult to bound $\EE[ \Norm{Q_t \nabla f_{t,k}}_*]$ for $k \ge 1$. We therefore consider 
    \begin{align*}
        \sum_{k=0}^{K-1} \eta \EE\left[ \Norm{Q_t^\top \nabla f_{t,k}}_* \right] \ge& \sum_{k=0}^{K-1} \eta \EE\left[ \Norm{Q_t^\top \nabla f_{t,0}}_* \right] - \sum_{k=0}^{K-1} \eta \EE\left[ \Norm{Q_t^\top (\nabla f_{t,k} - \nabla f_{t,0})}_* \right] \\
        \ge&
        \sum_{k=0}^{K-1} \eta \EE\left[ \Norm{Q_t^\top \nabla f_{t,0}}_* \right] - \sum_{k=1}^{K-1} \eta \EE\left[ \Norm{\nabla f_{t,k} - \nabla f_{t,0}}_* \right] \\
        \ge& \sum_{k=0}^{K-1} \eta \EE\left[ \Norm{Q_t^\top \nabla f_{t,0}}_* \right] - \sum_{k=1}^{K-1} \eta \sum_{l=1}^k \EE\left[ \Norm{\nabla f_{t,l} - \nabla f_{t,l-1}}_* \right] \\
        \ge& \sum_{k=0}^{K-1} \eta \EE\left[ \Norm{Q_t^\top \nabla f_{t,0}}_* \right] - \sum_{k=1}^{K-1} \eta L_{\mathrm{op}} \sum_{l=1}^k \EE\left[ \Norm{W_{tK + l} - W_{tK+l-1}}_{\mathrm{op}} \right] \\
        \ge& K \eta \EE\left[ \Norm{Q_t^\top \nabla f_{t,0}}_* \right] - \frac{K^2}{2} \eta^2 L_{\mathrm{op}} ,
    \end{align*}
    where the first and third inequalities are based on the triangular inequality and the second inequality is based on that $Q_tQ_t^\top \preceq I$. The second last inequality uses Assumption~\ref{asm:smooth}.
    Then we combine the above inequalities and further sum up over $t$ and use Assumption~\ref{asm:lower_f_bound} to obtain that
    \begin{align*}
        \sum_{t=0}^{\tau-1} K \EE\left[ \Norm{Q_t^\top \nabla f_{t,0}}_* \right] 
        \le& \frac{f(W_0) - f^*}{\eta} + \eta K \tau L_{\mathrm{op}} \left( \frac{K+1}{2} + \frac{2(1 - \beta^{K})}{1-\beta} \right) \\
        &+ 2\tau K\sqrt{(1 - \beta^{2K}) (1-\beta) } \Norm{V}_* + \sum_{t=0}^{\tau-1} \frac{2(1-\beta^K)}{1-\beta} \EE\left[ \Norm{\nabla f_{t,0}}_* \right]  .
    \end{align*}
    
    Combining Lemma~\ref{lem:expected_projected_gradient_lhs}, we have
    \begin{align*}
        K \EE\left[ \Norm{Q_t^\top \nabla f_{t,0}}_* \right] - \frac{2(1-\beta^K)}{1-\beta} \EE\left[ \Norm{\nabla f_{t,0}}_* \right] \ge \frac{K \alpha}{2} \EE\left[ \Norm{\nabla f_{t,0}}_* \right]
    \end{align*}
    where $\alpha \triangleq \min\{ q,1-q \}$ and we take $\alpha > \frac{2}{K}$ and $1 - \beta \ge \frac{2}{K \alpha}$. Thus, we can obtain that
    \begin{align*}
        \frac{\alpha}{2 \tau} \sum_{t=0}^{\tau-1} \EE\left[ \Norm{ \nabla f_{t,0}}_* \right] 
        \le& \frac{f(W_0) - f^*}{\eta T} + \eta L_{\mathrm{op}} \left( \frac{K+1}{2} + \frac{2}{1-\beta} \right) + 2\sqrt{1-\beta } \Norm{V}_*  \\
        \le& \frac{f(W_0) - f^*}{\eta T} + \eta L_{\mathrm{op}} \left( \frac{K+1}{2} + K\alpha \right) + 2\sqrt{1-\beta } \Norm{V}_*
    \end{align*}
    By choosing the hyperparameter as
    \begin{align}\label{eq:parameter_choice}
        \eta = \sqrt{\frac{T L_{\mathrm{op}} \left( \frac{K+1}{2} + K\alpha \right)}{f(W_0) - f^*}}, \quad \beta = 1 - \frac{2}{K \alpha} , \quad K = \max\left\{ 1, \min\left\{ \frac{\sigma \sqrt{T}}{\sqrt{\alpha L (f(W_0) - f^*)}}, T \right\} \right\},
    \end{align}
    we can obtain that
    \begin{align*}
        \min_{t = 0,\dots,\frac{T}{K}-1} \EE\left[ \Norm{\nabla f(W_{tK})} \right] \le \cO\left( \frac{1}{\alpha} \sqrt{\frac{L_{\mathrm{op}} \Delta}{T}} + \left( \frac{L_{\mathrm{op}} \Delta \Norm{V}_*^2}{\alpha^5 T} \right)^{\frac{1}{4}} + \frac{\Norm{V}_*}{\sqrt{\alpha^3 T} } \right) ,
    \end{align*}
    with $\Delta \triangleq f(W_0) - f^*$, which finishes the proof.
\end{proof}

\section{Training Setup and Hyperparameters}
\label{sec:training-setup}

\subsection{Fine-tuning Setup}
In our experiments, we slightly modify the full-rank update rule \eqref{eq:update_galore_muon_full_rank_compensate} for GUM by multiplying $(1-q_{t,\ell})$ on $-P_{t,\ell} P_{t,\ell}^\top G_{t,k,\ell}$. This modification still preserves the unbiased property while being able to recover the original full-parameter Muon algorithm by setting $q_{t,\ell}=1$.

We utilize LMFlow~\citep{diao2023lmflow}\footnote{https://github.com/OptimalScale/LMFlow} to perform full-parameter fine-tuning, GaLore tuning, and GUM tuning. We set the number of training epochs for all fine-tuning scenarios to 1. All experiments were conducted on a single NVIDIA H100 GPU with 80 GB of memory.

We explored a range of learning rates from $8 \times 10^{-6}$ to $1 \times 10^{-4}$, applying this range to Full Parameter Training, GaLore, and GUM. For GaLore, we fixed the rank $r = 512$ and applied it uniformly across all layers. In the case of GUM, the number of layers ($\gamma$) selected for full-rank updates was set to 2 for all models. The sampling interval $K$, which defines the number of update steps between each layer selection, was varied between 10 and 300, depending on factors such as dataset size, batch size, and total training steps. The models covered in this paper can be found in Table \ref{tab:models}.

\begin{table}[h]
\centering
\small
\caption{Baseline Model Configurations}
\vspace{2mm}
\begin{tabular}{l|c|c|c}
\toprule
\textbf{Model Name} & \textbf{\# Params} & \textbf{\# Layers} & \textbf{Model Dim} \\
\midrule
\textsc{LLaMA-3-8B}     & 8 B & 32 & 4096  \\
\textsc{Qwen-2.5-7B}    & 7 B   & 28 & 3584  \\
\textsc{Gemma-2-9B}    & 9 B   & 42 & 3584  \\
\bottomrule
\end{tabular}
\label{tab:models}
\end{table}

\subsection{Fine-tuning Hyperparameters}

We began our study by conducting a grid search over two key hyperparameters: (i) the learning rate and (ii) the number of sampling layers used for full-rank updates. Given the strong empirical performance of the GaLore method, we fixed the rank to $r = 512$. The learning rate was explored within the range \(\{8 \times 10^{-6}, 2 \times 10^{-5}, 4 \times 10^{-5}, 6 \times 10^{-5}, 8 \times 10^{-5}, 1 \times 10^{-4}\}\), applied consistently across full parameter training, GaLore, and GUM. For GaLore, we followed the official \texttt{Transformers} implementation \footnote{https://github.com/jiaweizzhao/GaLore}, using the default settings and aligning the learning rate with the full parameter training. With respect to the number of sampling layers, and in accordance with Table~\ref{tab:memory}, we selected values that did not exceed the GPU memory cost of GaLore. As a result, $\gamma = 2$ was used in most GUM configurations. The sampling period $K$ was uniformly set to 200 for all models. A detailed summary of the optimal hyperparameter values identified for each setting is provided in Table~\ref{tab:hyperparameters}.

\begin{table}[h]
\centering
\small
\caption{Optimal settings for each method were determined through hyperparameter search: FT (Full-parameter Training)-AdamW, FT-Muon, GaLore, and GUM.}
\vspace{2mm}
\begin{tabular}{l|c|c|cc|ccc}
\toprule
\multirow{2}{*}{\textbf{Model}} & \multicolumn{1}{c|}{\textbf{FT-AdamW}} & \multicolumn{1}{c|}{\textbf{FT-Muon}} & \multicolumn{2}{c|}{\textbf{GaLore}} & \multicolumn{3}{c}{\textbf{GUM}} \\
\cmidrule(lr){2-2} \cmidrule(lr){3-3} \cmidrule(lr){4-5} \cmidrule(lr){6-8}
& \textbf{lr} & \textbf{lr} & \textbf{lr} & \textbf{Rank} & \textbf{lr} & $\boldsymbol{\gamma}$ & $\boldsymbol{K}$ \\
\midrule
LLaMA-3-8B     & $3 \times 10^{-5}$ & $7 \times 10^{-5}$ & $9 \times 10^{-5}$ & 512 & $1 \times 10^{-4}$ & 2 & 200 \\
Qwen-2.5-7B      & $1 \times 10^{-5}$ & $5 \times 10^{-5}$ & $7 \times 10^{-5}$ & 512 & $7 \times 10^{-5}$ & 2 & 200 \\
Gemma-2-9B     & $ - $ & $4 \times 10^{-5}$ & $4 \times 10^{-5}$ & 512 & $6 \times 10^{-5}$ & 2 & 200 \\
\bottomrule
\end{tabular}
\label{tab:hyperparameters}
\end{table}

\subsection{Pre-training Hyperparameters}
\label{sec:pretraining-setup}
In our experiments, we utilize C-optim \footnote{https://github.com/kyleliang919/C-Optim} for the pre-training.  Following standard protocol, we fixed the LLaMA context length to 1024 tokens. Similar to the fine-tuning setup, we made a grid search on learning rate and the number of sampling layers. The sampling period $K$ was set to 100 for 130M and 350M models, 50 for the 60M model. A detailed summary of the optimal hyperparameter values identified for each setting is provided in Table~\ref{tab:pre-hyperparameters}.

\begin{table}[h]
\centering
\small
\caption{Optimal settings for each method were determined through hyperparameter search: AdamW, Muon, Fira, GaLore, and GUM.}
\vspace{2mm}
\begin{tabular}{l|c|c|cc|cc|ccc}
\toprule
\multirow{2}{*}{\textbf{Model}} & \multicolumn{1}{c|}{\textbf{AdamW}} & \multicolumn{1}{c|}{\textbf{Muon}} & \multicolumn{2}{c|}{\textbf{Fira}} & \multicolumn{2}{c|}{\textbf{GaLore}} & \multicolumn{3}{c}{\textbf{GUM}} \\
\cmidrule(lr){2-2} \cmidrule(lr){3-3} \cmidrule(lr){4-5} \cmidrule(lr){6-7} \cmidrule(lr){8-10}
& \textbf{lr} & \textbf{lr} & \textbf{lr} & \textbf{Rank} & \textbf{lr} & \textbf{Rank} & \textbf{lr} & $\boldsymbol{\gamma}$ & $\boldsymbol{K}$ \\
\midrule
LLaMA-60M     & $3 \times 10^{-3}$ & $1 \times 10^{-2}$ & $9 \times 10^{-3}$ & 256 & $9 \times 10^{-3}$ & 256 & $9 \times 10^{-3}$ & 4 & 50 \\
LLaMA-130M    & $2 \times 10^{-3}$ & $5 \times 10^{-3}$ &$5 \times 10^{-3}$ & 256 & $5 \times 10^{-3}$ & 256 & $5 \times 10^{-3}$ & 4 & 100 \\
LLaMA-350M    & $1 \times 10^{-3}$ & $3 \times 10^{-3}$ & $3 \times 10^{-3}$ & 256 & $3 \times 10^{-3}$ & 256 & $3 \times 10^{-3}$ & 6 & 100 \\
\bottomrule
\end{tabular}
\label{tab:pre-hyperparameters}
\end{table}

\section{Additional Experimental Results}

\subsection{Bias in GaLore}
To further illustrate how significant the bias in low-rank projection methods is, we analyze the residuals between low-rank projected gradients and the original full-rank gradients across multiple layers during the fine-tuning of the \textbf{Gemma-2-9B} model on the \textbf{GPT-4-LLM} dataset. The residual is computed as follows:

\begin{equation}  
    \chi_t = \frac{\|G_t^u - G_t^p\|_\mathrm{F}}{\|G_{t}^u\|_\mathrm{F}},  
\end{equation}  

where \( G_t^u \) represents the original gradient at iteration $t$ without projection, and \( G_t^p \) denotes the low-rank projected gradients in GaLore-Muon. 
We can see that $\chi_t$ presents the relative error between the original gradients and the projected gradients at iteration $t$, showing how much the projection operation makes the gradient estimation biased from the original one. We measure this relative error for each block of parameters along the trajectory of the GaLore-Muon algorithm every $20$ iterations. 
The projector update frequency is set to $200$, and the projection rank is $512$. We use a batch size of $16$ and a learning rate of $7 \times 10^{-5}$. For demonstration purposes, we specifically select the self-attention and MLP weights at layer $10$.

As depicted in Figure~\ref{fig:bias_galore}, the relative error shows a periodic curve. It is relatively small (around $0-20\%$) in the iteration $t$ such that $t$ is a multiple of the update frequency $200$, where the projector is updated based on the gradient. Since the GaLore projector is chosen as the singular vectors of the largest singular values of the current gradient, it is a good low-rank projector for the current gradient, which results in this small error. \footnote{Note that while the projector is good for the stochastic gradient used in the algorithm, it can still be a large obstacle to the convergence, as shown in Figure~\ref{fig:when-galore-fails}.}
However, we can see that the relative error rapidly increases after this and achieves even higher than $60 - 80\%$ in less than $20$ iterations. This implies that although the low-rank projection of GaLore doesn't hurt much in the first iteration, it makes little sense for the following gradients, since the projection produces a really high relative error.
Such a high relative error demonstrates a remarkably significant bias between the low-rank projected gradients and the original gradients, and between GaLore and the original gradient algorithm, highlighting the need to derive an unbiased low-rank projection algorithm.

\begin{figure}
    \centering
    \includegraphics[width=\linewidth]{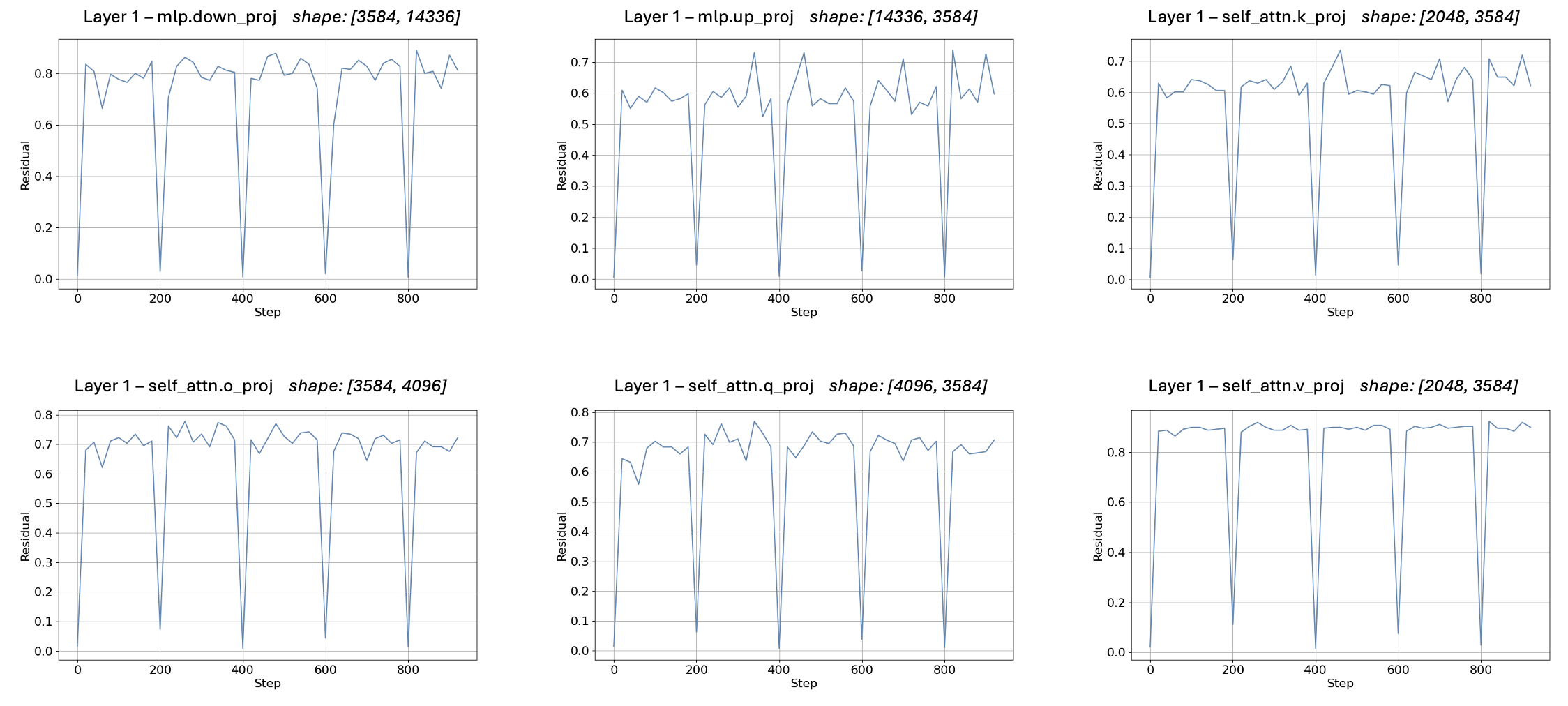}
    \caption{Residual ($\chi_t = \|G_t^u - G_t^p\|_\mathrm{F}/\|G_{t}^u\|_\mathrm{F}$) between GaLore's projected and original gradients across different blocks during \textbf{Gemma-2-9B} fine-tuning. High residuals persist throughout training (except for the iterations with projector updates), revealing systematic bias in GaLore updates.}
    \label{fig:bias_galore}
\end{figure}

\subsection{Singular Value Distribution of Model Weights}
\label{appendix:detailed-singular-value-distribution}

As shown in Figure~\ref{fig:singular-value-galore-gum}, GUM demonstrates a smoother and more long-tailed singular value distribution than GaLore, especially in modules of \texttt{gate proj} and \texttt{up proj}. The spectrums are also more differentiated and have a non-trivial diversity across layers in GUM.

\begin{figure}
  \centering 
  \includegraphics[width=0.48\linewidth]{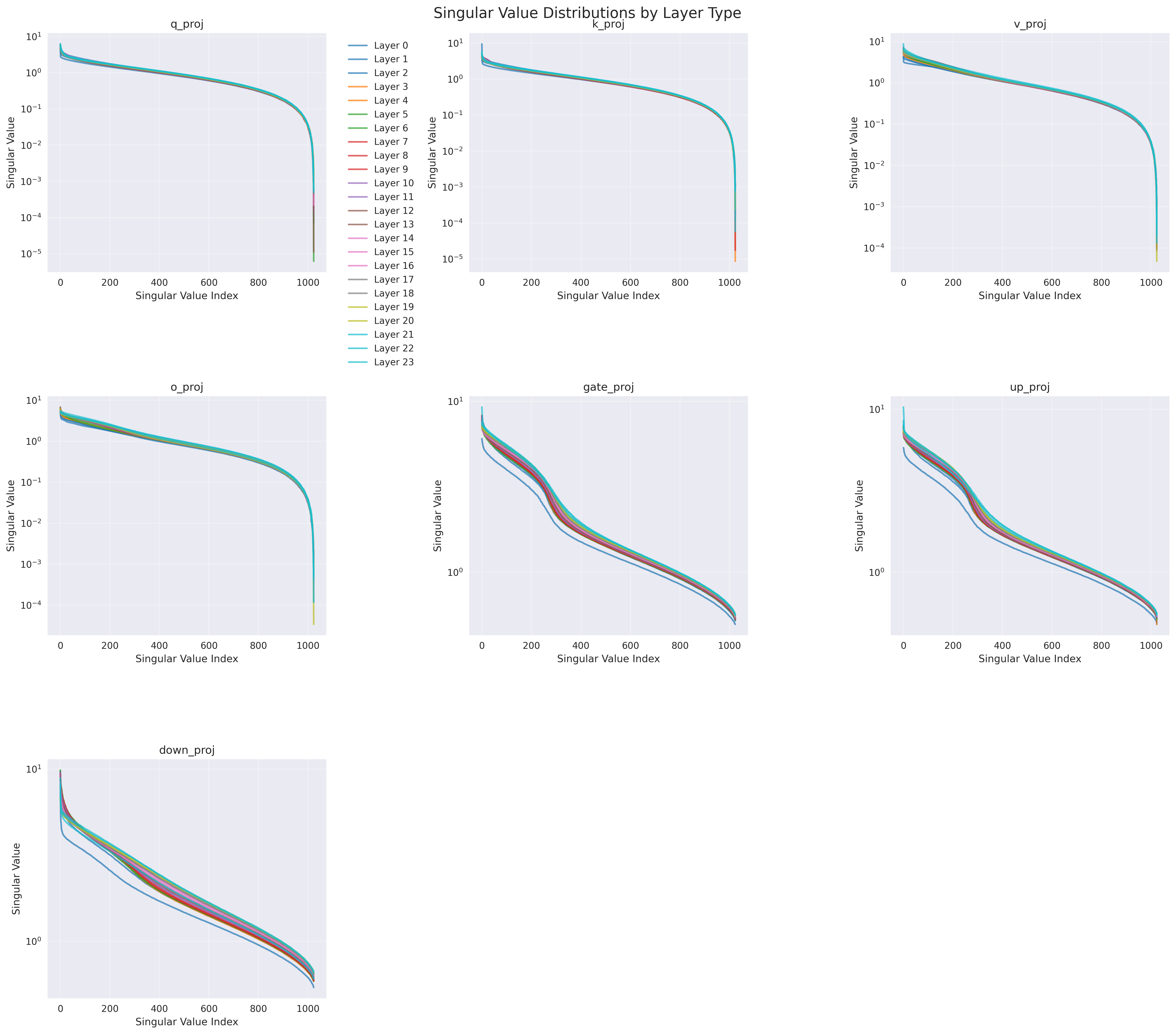}
  \includegraphics[width=0.48\linewidth]{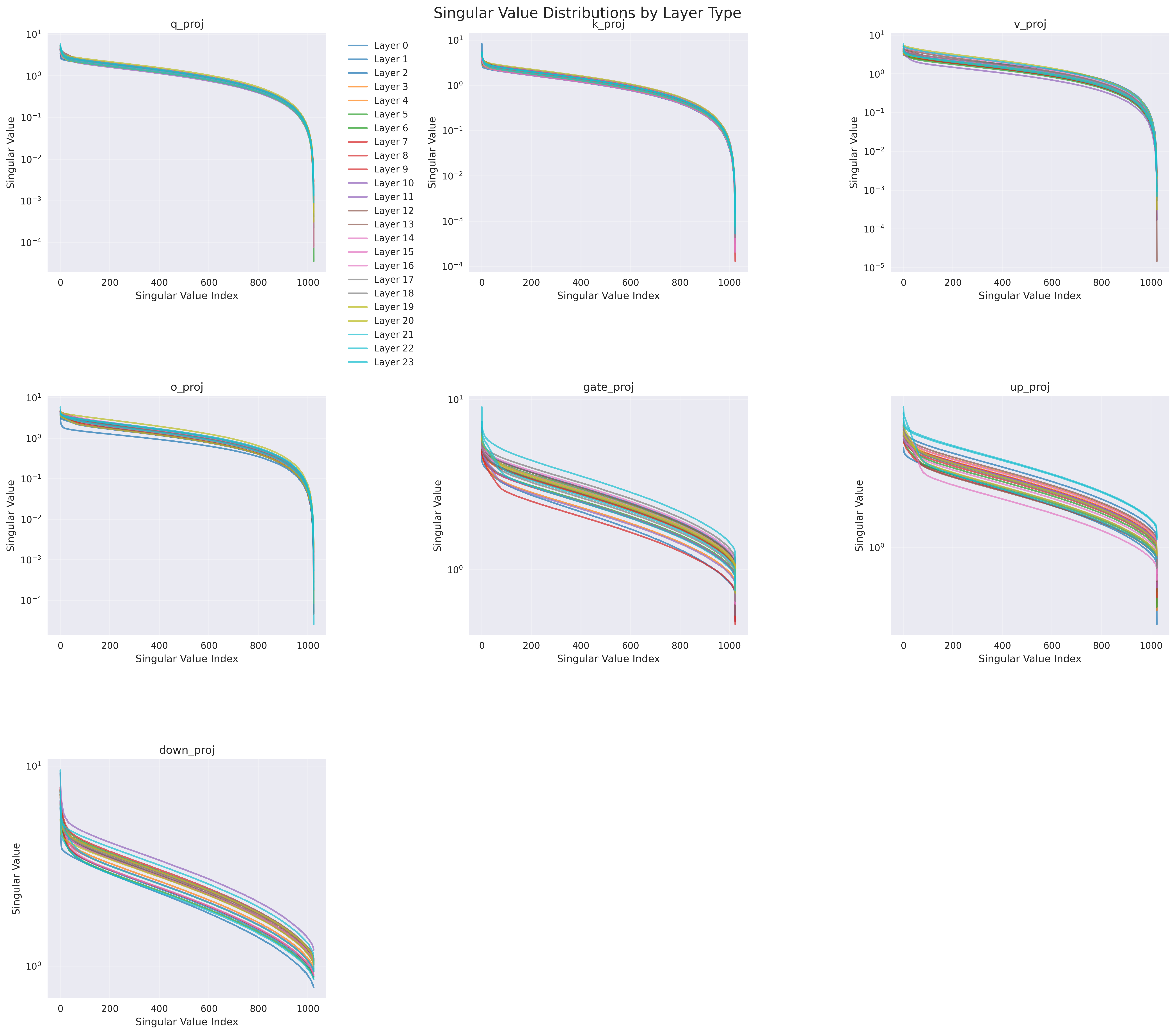}
  \caption{\textbf{Detailed Singular Value Distribution.} \textbf{Left}: GaLore. \textbf{Right}: GUM. It can be observed that GaLore has a sudden magnitude drop in the tail distribution of singular values in \texttt{gate proj} and \texttt{up proj} modules. GUM generally demonstrates smoother and more long-tailed singular value distributions. Furthermore, GUM has a differentiated spectrum across different layers, while this phenomenon is much weaker in GaLore.}
  \label{fig:singular-value-galore-gum}
\end{figure}

\section{Broader Impacts}\label{sec:appendix:broader_impacts}
Memory-efficient training techniques are critical for scalable LLM development and for democratizing customized LLMs for broader societal use. Improving theoretical guarantees provides insights for the invention of new methods with enhanced performance, leading to reduced computational resource consumption and lower carbon dioxide emissions.

\section{Limitations}\label{sec:limitations}

The technique of sampled high-rank updates inherently introduces high variance into the per-iteration updates when the sampling probability is low, which leads to instability in the training procedure and requires more careful tuning of the hyperparameters. To alleviate this issue, standard theoretical tools for variance reduction can be employed~\citep{johnson2013accelerating,needell2014importancesampling,ge2019stabilized}, which we leave for future work here. The analysis can also be combined with other acceleration~\citep{zhang2017stochastic,ge2019step,pan2021eigencurve,pan2023accelerated,defazio2024road,liu2025warmup} and generalization techniques~\citep{arjovsky2019invariant,foret2020sharpness,hao2025understanding}, whose properties are worth investigating as open problems. The algorithm's empirical performance and computational cost in other types of models~\citep{devlin-etal-2019-bert, rombach2022diffusion, pan2022extremebert,liu2023vit,gu2023mamba,hu2024fox,wang2025chain,mu2025comprehensive} and applications~\citep{xia2023sheared,peebles2023sora,pan2024scalebio} also remain as interesting questions.

\section{The Use of Large Language Models}

ChatGPT and GPT-5 were adopted to polish the writing of the paper, where all revised sentences were double-checked by the authors. OpenAI Deep Research was utilized for finding dataset licenses.

\section{Licenses}
\label{appendix:licenses}
For mathematical reasoning tasks in LLM fine-tuning, the training dataset comes from 4 different sources: DART-Math~\citep{tong2024dartmath}, UltraInteract~\citep{yuan2024advancing}, MathInstruct~\citep{yue2023mammoth}, and Orca-Math~\citep{mitra2024orcamath}, with their licenses listed in Table~\ref{tab:license}. Other datasets and benchmarks are also available in the same table.

For code repositories, LMFlow~\citep{diao2023lmflow} is released under Apache-2.0 license.

\begin{table}[h]
    	\centering

	\renewcommand    
	\arraystretch{1.3}
            \setlength{\tabcolsep}{1.5mm}

    \resizebox{\textwidth}{!}{
    \begin{tabular}{lrrr}
        \toprule
        Training Datasets & \#Samples & Kind & License \\
        \toprule

        \texttt{teknium/GPT4-LLM-Cleaned}\tablefootnote{\url{https://huggingface.co/datasets/teknium/GPT4-LLM-Cleaned}} & 55K & Instruction & CC BY-NC 4.0 \\
        
        DART-Math\tablefootnote{\url{https://huggingface.co/datasets/hkust-nlp/dart-math-uniform}}~\citep{tong2024dartmath} & 591K & Math & MIT \\
        \texttt{openbmb/UltraInteract\_sft}\tablefootnote{\url{https://huggingface.co/datasets/openbmb/UltraInteract_sft}}~\citep{yuan2024advancing} & 289K & Reasoning & MIT \\
        \texttt{TIGER-Lab/MathInstruct}\tablefootnote{\url{https://huggingface.co/datasets/TIGER-Lab/MathInstruct}}~\citep{yue2023mammoth} & 262K & Reasoning & MIT \\
        \texttt{microsoft/orca-math-word-problems-200k}\tablefootnote{\url{https://huggingface.co/datasets/microsoft/orca-math-word-problems-200k}}~\citep{mitra2024orcamath} & 200K & Math & MIT \\

        C4 corpus\tablefootnote{\url{https://huggingface.co/datasets/allenai/c4}}~\citep{c4data} & >1B & Commonsense & ODC-BY \\
        
        \midrule
        IFEval~\citep{zhou2023instructionfollowingevaluationlargelanguage} & 0.5K &  Instruction & Apache-2.0 \\ 
        GSM8K~\citep{cobbe2021trainingverifierssolvemath} & 7.5K & Math & MIT \\
        ARC-E~\citep{clark2018arc} & 5.2K & Instruction & CC-BY-SA-4.0 \\
        ARC-C~\citep{clark2018arc} & 2.6K & Instruction & CC-BY-SA-4.0
        \\
        
        HellaSwag~\citep{zellers2019hellaswag} & 10K  & Commonsense reasoning & MIT
        \\
        PIQA\tablefootnote{\url{https://github.com/ybisk/ybisk.github.io/tree/master/piqa}}~\citep{bisk-etal-2020-piqa} & 3K & Commonsense reasoning & \makecell[r]{Academic Free\\License v. 3.0}
        \\
        SIQA~\citep{sap-etal-2019-social} & 2.2K & Commonsense reasoning &
        CC-BY-4.0~\citep{li2024datacomp}
        \\
        Winogrande\tablefootnote{\url{https://github.com/allenai/winogrande}}~\citep{sakaguchi2021winogrande} & 1.8K & Commonsense reasoning & CC-BY
        \\
        OBQA~\citep{OpenBookQA2018} & 5.9K & Commonsense reasoning & (permissive open license)\tablefootnote{The OpenBookQA dataset is released under a permissive open license, making it freely available for academic research. In practice, sources indicate that the dataset is in the public domain or under a very permissive license. For example, a Kaggle distribution of OpenBookQA explicitly labels it CC0 1.0 Universal (Public Domain)~(\url{https://www.kaggle.com/datasets/thedevastator/openbookqa-a-new-dataset-for-advanced-question-a}). Similarly, a curated dataset list reports OpenBookQA’s license as Apache 2.0~(\url{https://github.com/lmmlzn/Awesome-LLMs-Datasets}). Both of these licenses allow unrestricted use, redistribution, and modification of the data, including for academic purposes.}
        \\
        \bottomrule
    \end{tabular}
    }
    \caption{Licenses of training datasets and benchmarks. Here, the number of samples for benchmarks only counts the test set.}
    \label{tab:license}
\end{table}

\newpage

\end{document}